\documentclass{iopjournal}

\usepackage{graphicx}
\usepackage{placeins}
\usepackage{subcaption}
\usepackage{cite}
\usepackage{amsthm}
\usepackage{amsmath}
\usepackage{amssymb}
\usepackage{orcidlink}
\newtheorem{proposition}{Proposition}
\newtheorem{corollary}{Corollary}

\usepackage[linesnumbered,ruled,vlined]{algorithm2e}

\newcommand{\defineproblem}[2]{%
  \hypertarget{#1}{#2}%
}

\newcommand{\problemref}[1]{%
  Example~\hyperlink{#1}{#1}%
}

\newcommand{\problemreff}[1]{%
  \hyperlink{#1}{#1}%
}

\begin{document}

\articletype{Paper}

\title{Neural-network methods for two-dimensional finite-source reflector design}

\author{Roel Hacking$^{1,*}$\orcidlink{0009-0001-0820-2614}, Lisa Kusch$^1$\orcidlink{0000-0001-7558-0618}, Koondanibha Mitra$^1$\orcidlink{0000-0002-8264-5982}, Martijn Anthonissen$^1$\orcidlink{0000-0003-3667-769X} and Wilbert IJzerman$^{1,2}$}

\affil{$^1$Eindhoven University of Technology, PO Box 513, 5600 MB, Eindhoven, The Netherlands}

\affil{$^2$Signify, High Tech Campus 7, 5656 AE, Eindhoven, The Netherlands}

\affil{$^*$Author to whom any correspondence should be addressed.}

\email{r.g.j.hacking@tue.nl}

\keywords{physics-informed neural networks, inverse design, differentiable physics, neural parameterization, freeform optics}

\begin{abstract}
We address the inverse problem of designing two-dimensional reflectors that transform light from a finite, extended source into a prescribed far-field distribution. We propose a neural-network parameterization of the reflector height and develop two differentiable objective functions: (i) a direct change-of-variables loss that pushes the source distribution through the learned inverse mapping, and (ii) a mesh-based loss that maps a target-space grid back to the source, integrates over intersections, and remains continuous even when the source is discontinuous. Gradients are obtained via automatic differentiation and optimized with a robust quasi-Newton method. As a comparison, we formulate a deconvolution baseline built on a simplified finite-source approximation: a one-dimensional monotone mapping is recovered from flux balance, yielding an ordinary differential equation solved in integrating-factor form; this solver is embedded in a modified Van~Cittert iteration with nonnegativity clipping and a ray-traced forward operator. Across four benchmarks---continuous and discontinuous sources, and with and without minimum-height constraints---we evaluate accuracy by the ray-traced normalized mean absolute error. Our neural-network approach converges faster and achieves consistently lower errors than the deconvolution method, and handles height constraints naturally. On our two main benchmarks the neural network reduces this error to $2\times 10^{-5}$ for a continuous source and $5\times 10^{-5}$ for a discontinuous source within a few seconds of wall-clock time on a single GPU, against $4\times 10^{-3}$ and $5\times 10^{-2}$ for the deconvolution baseline after several hundred seconds---two to three orders of magnitude lower error at two orders of magnitude less wall-clock time. We discuss how the method may be extended to rotationally symmetric and full three-dimensional settings via iterative correction schemes.

\end{abstract}

\section{Introduction}
Precise control of light propagation is a cornerstone of modern optics, underpinning applications that range from advanced illumination systems and high-efficiency solar concentrators to free-space optical communications and complex beam-shaping tasks\,\cite{Welford1989,Winston2004Nonimaging,Chaves2015IntroNonimaging}. Achieving a desired illumination pattern typically relies on carefully designed optical components---often reflectors or freeform surfaces---that redistribute and shape light according to prescribed specifications. Historically, reflector and freeform surface design has been carried out under simplified assumptions of point sources or collimated (infinite-distance) sources, leading to classic constructions such as parabolic mirrors and a rich body of nonimaging optics theory\,\cite{Welford1989,Winston2004Nonimaging,Chaves2015IntroNonimaging}. In practice, however, most light sources (e.g., LEDs or arc lamps) have finite spatial extent and angular emission. This finite-source nature introduces additional complexity into the optical design process.

Designing reflectors for extended sources presents challenges that go beyond the varying surface normal calculations required for collimated beams. In the ideal parallel-source limit, each point on the reflector intercepts a single ray direction; in the finite-source regime, however, every spatial point emits light over a continuous angular range. The resulting far-field intensity is effectively a kind of convolution of the ideal parallel-source response with this angular emission profile\,\cite{Boesel2019,Wei2021}. Due to étendue conservation, this angular extent introduces characteristic blurring, meaning that designs based on simple parallel-beam mappings often fail to realize high-contrast irradiance targets. While increasing the scale of the system relative to the source can reduce the angular size---approximating the collimated case---this compromises compactness\,\cite{Boesel2019}. These physical constraints have motivated the development of frameworks that explicitly incorporate source extent, building on point-source optimal-transport theory\,\cite{GlimmOliker2003JMS,GangboOliker2007COCV,Romijn2019JOSAA,Romijn2020JCP} and its recent extension to finite sources\,\cite{Benamou2022JCP}, and on edge-ray-mapping methods for extended sources\,\cite{Birch2020AO}. Other notable approaches include wavefront tailoring\,\cite{Sorgato2019Optica}, Simultaneous Multiple Surface (SMS) methods\,\cite{Benitez2004SMS3D,Winston2004Nonimaging}, and direct three-dimensional constructions for extended sources\,\cite{Wu2016,Zhu2022}.

A second, more signal-processing-oriented strategy treats the finite-source effect as a blurring convolution and applies deconvolution within the design loop. Supporting-ellipsoid methods combined with numerical optimization have been used to account for source extent\,\cite{Fournier2009}, and recent work shows that explicitly deblurring the extended-source response can yield freeform surfaces that are robust with respect to source size\,\cite{Wei2021}. Classical iterative deconvolution methods such as Van~Cittert and Richardson--Lucy provide algorithmic building blocks\,\cite{vanCittert1931,BurgerVanCittert1932,Richardson1972,Lucy1974,HillIoup1976,Jansson1997}. \emph{In this paper, our deconvolution-based baseline is not novel: it is an adaptation of existing approaches, most notably the method in \cite{vithesis}, tailored to our finite-source setting and evaluation protocol.} We include it to provide a meaningful and transparent point of comparison.

Concurrently, there is growing interest in differentiable and data-driven approaches for inverse illumination problems. Algorithmic differentiation and differentiable non-sequential ray tracing enable gradient-based optimization of freeform elements directly through the rendering pipeline\,\cite{Heemels2024OE,deKoning2023Arxiv}. These approaches treat the rendering pipeline as a stochastic black box and rely on Monte-Carlo sampling for the gradient; the methods we present below instead exploit the closed-form inverse mapping of the reflector---and, for the mesh-based loss, the cell-area structure of the target grid---to obtain a deterministic, smooth loss without sampling noise. Machine-learning surrogates have been explored for accelerating design or inferring topology from irradiance\,\cite{Gannon2019ML,Cerpentier2024OExNNTopo}, and neural parameterizations have been investigated for transport-driven partial differential equations (PDEs) and related reflector formulations\,\cite{HACKING2025100119}, with robust second-order quasi-Newton optimization improving performance\,\cite{URBAN2025113656}.

\textbf{This work.} We address the two-dimensional reflector-design problem for a finite linear source, transforming a given source luminance into a specified far-field irradiance. Our \emph{novel} contribution is a neural-network parameterization of the reflector height combined with \emph{two differentiable objective functions}:
(i) a direct change-of-variables loss that pushes the source distribution through the learned inverse mapping; and
(ii) a mesh-based loss that maps a target-space grid back to the source, integrates over valid intersections, and remains continuous even when the source is discontinuous.
Gradients are obtained via automatic differentiation and optimized with a robust quasi-Newton method. To contextualize these neural results, we compare against a \emph{baseline} deconvolution pipeline built on a simplified finite-source approximation and an iterative update (Van~Cittert with nonnegativity clipping), adapted from the approach in \cite{vithesis}. 

We begin by formulating the forward problem of light transport from a finite source to the far field via a reflector, establishing the relationship between source radiance, reflector geometry, and the resulting angular distribution. We then detail our two neural losses and training procedure, followed by the deconvolution baseline adapted from \cite{vithesis}. Two benchmark examples with known ``ground-truth'' reflectors and two additional height-constrained studies are used to evaluate performance.

\section{Problem setup}
We first describe the general problem considered in this paper and solved by our neural-network method in the following section. Next, we describe a simplification of this problem that is used as part of the deconvolution method against which we compare. 

\subsection{Finite-source problem setup}
\label{sec:finite-source-problem}
\begin{figure}[h]
\centering
\includegraphics{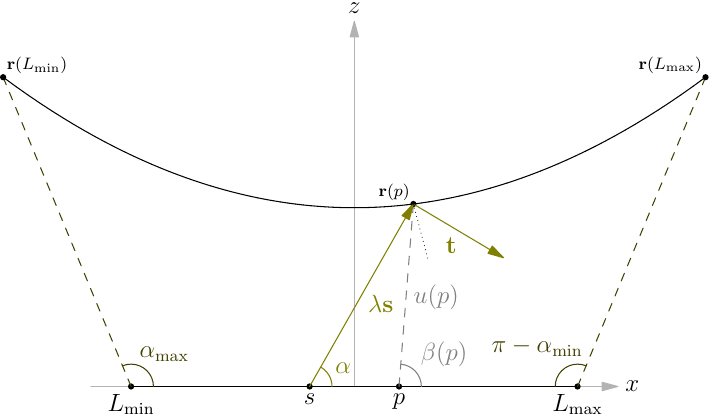}
\caption{Finite-source-to-far-field reflector system setup.}
\label{fig:finite-source-diagram}
\end{figure}
Consider the setup shown in Figure~\ref{fig:finite-source-diagram}. Let $\Omega = [L_{\min}, L_{\max}] \subset \mathbb{R}$ denote the spatial support of a linear light source situated along the $x$-axis in the $xz$-plane. For each $s \in \Omega$, light is emitted in all directions $\alpha \in A = [\alpha_{\min}, \alpha_{\max}] \subset \mathbb{R}$, where $A$ denotes the angular source domain represented as angles in radians measured counterclockwise from the positive $x$-axis. The full source domain is $\mathcal{S} := \Omega \times A$, and the light intensity is described by a nonnegative source distribution function $f : \mathcal{S} \to \mathbb{R}_{\geq 0}$.

Each ray is emitted in a direction $\mathbf{s} \in \mathbb{S}^1$ according to the angle $\alpha$. It is then redirected by a reflector surface defined over the same spatial base domain \( \Omega \) and a corresponding coordinate $p \in \Omega$. The reflector is given by the parametric map
\begin{equation}
\mathbf{r}(p) = \begin{bmatrix} p \\ 0 \end{bmatrix} + u(p) \begin{bmatrix} \cos\left( \beta(p) \right) \\ \sin\left( \beta(p) \right) \end{bmatrix}, 
\end{equation}
where \( \beta : \Omega \to \mathbb{R} \) is a function that maps source points to angles, and \( u : \Omega \to \mathbb{R}_{> 0} \) is the unknown height function describing the reflector geometry. For all experiments described here, we define $\beta(p)$ as 

\begin{equation}
\beta(p) = \alpha_{\max} + \frac{\alpha_{\min} - \alpha_{\max}}{L_{\max} - L_{\min}} (p - L_{\min}),
\end{equation}
i.e., we linearly map points from the range $[L_{\min}, L_{\max}]$ to the range $[\alpha_{\max}, \alpha_{\min}]$. 

This parameterization ensures that every emitted ray intersects the reflector curve, regardless of the choice of~$u$, provided only that $u$ is continuous and strictly positive. In particular, the result holds for any height function that a neural network $u_{\boldsymbol{\theta}}$ might produce during optimization, as long as $u_{\boldsymbol{\theta}}(p) > 0$ on $\Omega$. The proof is given in Appendix~\ref{app:ray-intersection}.

At each point, the curve normal is computed as
\begin{equation}
\mathbf{n}(p) = \frac{1}{\left\| \left( \frac{\partial r_z}{\partial p}, -\frac{\partial r_x}{\partial p} \right) \right\|} \left( \frac{\partial r_z}{\partial p}, -\frac{\partial r_x}{\partial p} \right),
\end{equation}
where \(\left\| \cdot \right\|\) denotes the Euclidean norm. 

A ray in direction \( \mathbf{s} \) reflects at the point \( \mathbf{r}(p) \) into the direction
\begin{equation}
\mathbf{t}(p) = \mathbf{s} - 2\langle \mathbf{s}, \mathbf{n}(p) \rangle \mathbf{n}(p).
\end{equation}
This reflected direction vector, denoted by its components $\mathbf{t}(p) = (t_x, t_z)$, is subsequently mapped to a scalar coordinate $\sigma$ on the far-field target domain $\Sigma = [T_{\min}, T_{\max}] \subset \mathbb{R}$ via stereographic projection. Geometrically, we project from the north pole $(0,1)$ of the unit circle onto the equatorial axis $z=0$, yielding the mapping
\begin{equation}
\sigma = \frac{t_x}{1 - t_z}.
\end{equation}
The full target domain is defined as \( \mathcal{T} := \Omega \times \Sigma \).

The reflector defines a mapping from \( \mathcal{S} \) to $\mathcal{T}$, implicitly through the geometry of the reflector. We denote this mapping as 

\begin{equation}
\mathbf{m} : \mathcal{S} \to \mathcal{T}, \quad (s, \alpha) \mapsto (p, \sigma).
\end{equation}

\noindent Concretely, the forward mapping $\mathbf{m}$ is obtained by (i) finding the intersection parameter $p$ such that the ray from $(s,0)$ in direction $(\cos\alpha, \sin\alpha)$ hits the reflector at $\mathbf{r}(p)$, (ii) reflecting the ray via the law of reflection using the surface normal $\mathbf{n}(p)$ to obtain $\mathbf{t}$, and (iii) applying the stereographic projection $\sigma = t_x/(1-t_z)$. This mapping has no closed-form expression, since step~(i) requires solving a nonlinear equation that depends on the reflector geometry.

The inverse mapping $\mathbf{m}^{-1}: \mathcal{T} \to \mathcal{S}$, $(p,\sigma) \mapsto (s, \alpha)$, reverses this process: (i) the far-field coordinate $\sigma$ is mapped back to the reflected direction $\mathbf{t}$ via the inverse stereographic projection $t_x = 2\sigma/(\sigma^2+1)$, $t_z = (\sigma^2-1)/(\sigma^2+1)$; (ii) the incoming direction $\mathbf{v}_{\mathrm{src}}$ is recovered by reversing the reflection at the known surface point $\mathbf{r}(p)$ with normal $\mathbf{n}(p)$; and (iii) the source coordinate $s$ is found analytically by intersecting the ray from $\mathbf{r}(p)$ in direction $\mathbf{v}_{\mathrm{src}}$ with the source line $z=0$, while $\alpha$ follows from the direction of $\mathbf{v}_{\mathrm{src}}$. Unlike the forward mapping, $\mathbf{m}^{-1}$ admits a closed-form expression, since the reflector parameter $p$ directly determines the surface point and normal, and all subsequent operations are analytical. The full expressions are given in Section~\ref{sec:direct-method}.

This induces a pushforward of the source distribution \( f \) to a directional output distribution over \( \Sigma \). The resulting marginal far-field distribution is given by
\begin{equation}
\label{eq:main}
g(\sigma) = \int_{\Omega} f\left( \mathbf{m}^{-1}(p, \sigma) \right) 
\left| \det\left( \left. \frac{\partial \mathbf{m}^{-1}(\mathbf{z})}{\partial \mathbf{z}} \right|_{\mathbf{z} = (p, \sigma)} \right) \right| \, \mathrm{d}p 
= \int_{\Omega} g\left(p, \sigma \right) \, \mathrm{d}p, 
\end{equation}
where $g(p, \sigma)$ is the full target distribution and $g(\sigma)$ is the far-field distribution obtained by marginalizing over the target domain. The determinant term accounts for the change of variables induced by the mapping \( \mathbf{m} \). Our objective is to determine the function \( u : \Omega \to \mathbb{R}_{>0} \) such that the induced target distribution \( g(\sigma) \) matches a prescribed far-field target distribution $\hat{g}(\sigma)$ for all \( \sigma \in \Sigma \).

\subsection{Approximate finite-source problem}
\label{sec:finite-source-approx-problem}
\begin{figure}[h]
\centering
\includegraphics{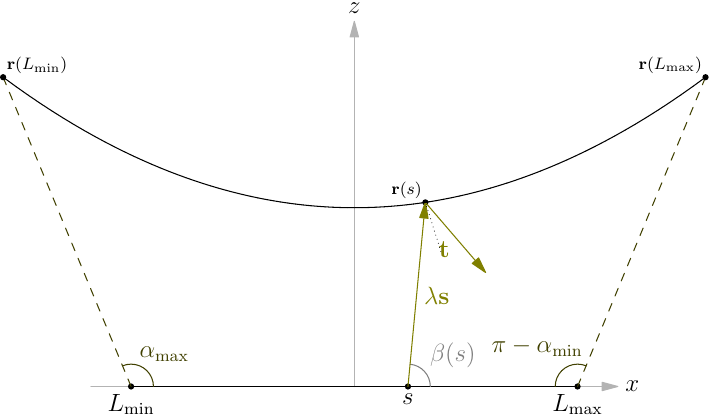}
\caption{Finite-source-to-far-field reflector system approximation setup.}
\label{fig:extended-source-diagram}
\end{figure}
To evaluate the performance of our neural-network method, we compare against a deconvolution-based method similar to that presented in \cite{vithesis}. This approach requires us to solve an approximate version of the full problem iteratively, refining the solution by adjusting the target distribution used for this subproblem. 

In the approximation of the problem, instead of each point emitting light in all directions $\alpha \in A$, light is instead emitted only in the direction $\beta(s)$. Thus, each ray hits the reflector at the point $\mathbf{r}(s)$ and the emission coordinate and reflector coordinate are the same, $p = s$. This eliminates the angular degree of freedom: the two-dimensional mapping $\mathbf{m} : (s, \alpha) \mapsto (p, \sigma)$ reduces to a one-dimensional mapping
\begin{equation}
m : \Omega \to \Sigma, \quad s \mapsto \sigma,
\end{equation}
since both $p$ and $\alpha$ are now determined by $s$ alone. Correspondingly, the source distribution reduces to its spatial marginal $f(s) = \int_A f(s, \alpha) \, \mathrm{d}\alpha$.

Under this reduction, the inverse mapping $\mathbf{m}^{-1}(p, \sigma)$ simplifies to $m^{-1}(\sigma)$, since for each $\sigma$ there is exactly one source coordinate $s = m^{-1}(\sigma)$ (with $p = s$). The one-dimensional analog of the change of variables in Eq.~\eqref{eq:main} then gives the far-field distribution directly, without integration:
\begin{equation}
g(\sigma) = f\left(m^{-1}(\sigma)\right) \left| \frac{d}{d\sigma} m^{-1}(\sigma) \right|.
\end{equation}

As both distributions are one-dimensional now, we obtain a much simpler ordinary differential equation (ODE), which can be solved both efficiently and accurately. Moreover, as we are using the same angular mapping function $\beta$ for this approximation of the problem, ray-tracing the solution we obtain from this approximation will not result in any rays missing the reflector curve, thus reflecting all light toward the target. A complementary reduction arises when the reflector height is scaled uniformly as $\lambda u(p)$ with $\lambda \to \infty$: the finite-source problem again collapses to a one-dimensional ODE, but driven by the angular marginal $F(\alpha) = \int_\Omega f(s, \alpha)\,\mathrm{d}s$ rather than the spatial marginal used here; see Appendix~\ref{app:scaling-limit} for details. This is the regime in which classical point-source and nonimaging-optics constructions\,\cite{Welford1989,Winston2004Nonimaging,Chaves2015IntroNonimaging,Romijn2019JOSAA} apply directly, and where the finite-source machinery developed in this paper---both the neural methods and the deconvolution baseline---becomes unnecessary.

The problem setup for this simplified approximation of the finite-source problem is illustrated in Figure~\ref{fig:extended-source-diagram}. All symbols used are summarized in Table~\ref{tab:symbols}.

\begin{table}[h]
\centering
\caption{Symbols used throughout the paper.}
\begin{tabular}{ll}
\hline
\textbf{Symbol} & \textbf{Meaning} \\
\hline
$\Omega$ & Spatial support of the linear light source (source base domain) \\
$A$ & Angular source domain (radian angles from positive $x$-axis) \\
$\mathcal{S}$ & Full source domain, $\mathcal{S} := \Omega \times A$ \\
$\Sigma$ & Stereographic far-field target domain \\
$\mathcal{T}$ & Full target domain, $\mathcal{T} := \Omega \times \Sigma$ \\
$f(s, \alpha)$ & Source distribution function over $\mathcal{S}$ \\
$f(s)$ & Marginal source distribution function over $\Omega$ \\
$g(p, \sigma)$ & Target distribution function over $\mathcal{T}$ \\
$g(\sigma)$ & Marginal far-field target distribution function over $\Sigma$ \\
$s$ & Source positional coordinate in $\Omega$ \\
$p$ & Reflector curve parameter in $\Omega$ \\
$\mathbf{s}$ & Emission direction vector \\
$\mathbf{r}(p)$ & Reflector surface point for parameter $p$ \\
$\beta(p)$ & Angular mapping function used to define the reflector curve $\mathbf{r}$ \\
$u(p)$ & Reflector height function \\
$\mathbf{n}(p)$ & Normal vector to the reflector surface at $p$ \\
$\mathbf{t}(p)$ & Reflected direction vector \\
$\sigma$ & Far-field angular coordinate in $\Sigma$ \\
$\mathbf{m}(s, \alpha)$ & Mapping from source to target domain \\
$\mathbf{m}^{-1}(p, \sigma)$ & Inverse of the mapping $\mathbf{m}$ \\
$m(s)$ & Simplified source-to-target mapping in the approximate problem \\
\hline
\end{tabular}
\label{tab:symbols}
\end{table}

\section{Methods}
\subsection{Neural-network-based solver}
\label{sec:ann}
We can solve the problem described in Section~\ref{sec:finite-source-problem} by representing the reflector height function $u(p)$ as a multilayer perceptron (MLP) and minimizing an appropriate loss function. We denote the network parameters (weights and biases) collectively as $\boldsymbol{\theta}$, and write the parameterized height function as $u_{\boldsymbol{\theta}}(p) = \mathrm{MLP}_{\boldsymbol{\theta}}(p) + p^2/2 + 1$, where the additive parabola biases the initial reflector toward a convex, strictly positive shape so the network only needs to learn a correction. The main challenge is to define a loss function such that the network learns a reflector that correctly maps the source distribution to the target distribution. We consider two approaches, described in the following sections. 

\subsubsection{Direct method}
\label{sec:direct-method}
The first method we will consider is to simply apply the change-of-variables formula shown in Eq.~\eqref{eq:main}. To do so, we must explicitly construct the inverse mapping \(\mathbf{m}_{\boldsymbol{\theta}}^{-1}: (p, \sigma) \to (s, \alpha)\) defined by the neural reflector surface. Given a reflector parameter \(p\) and a far-field coordinate \(\sigma\), the corresponding source coordinates are recovered via a geometric ``back-tracing'' procedure. 

First, the far-field scalar \(\sigma\) is mapped back to the unit reflected direction vector \(\mathbf{t} = (t_x, t_z)\) via the inverse stereographic projection:
\begin{equation}
    t_x = \frac{2\sigma}{\sigma^2 + 1}, \quad t_z = \frac{\sigma^2 - 1}{\sigma^2 + 1}.
\end{equation}
Simultaneously, the surface point \(\mathbf{r}(p) = (r_x, r_z)\) and its unit normal \(\mathbf{n}(p)\) are computed from the network output \(u_{\boldsymbol{\theta}}(p)\) and its spatial gradients (obtained via automatic differentiation). By treating \(\mathbf{t}\) as the outgoing ray and reversing the law of reflection, we determine the unit vector \(\mathbf{v}_{\mathrm{src}} = (v_x, v_z)\) pointing from the reflector toward the source:
\begin{equation}
    \mathbf{v}_{\mathrm{src}} = 2 \langle \mathbf{t}, \mathbf{n}(p) \rangle \mathbf{n}(p) - \mathbf{t}.
\end{equation}
We then cast a ray from \(\mathbf{r}(p)\) in the direction \(\mathbf{v}_{\mathrm{src}}\) to find its intersection with the source line (the \(x\)-axis, where \(z=0\)). The spatial source coordinate \(s\) is found by solving for the scalar \(\lambda\) such that \(r_z + \lambda v_z = 0\):
\begin{equation}
    s = r_x + \lambda v_x = r_x - r_z \frac{v_x}{v_z}.
\end{equation}
Finally, the emission angle is recovered as \(\alpha = \operatorname{atan2}(-v_z, -v_x)\). If \(s \notin \Omega\), the density contribution is zero.

With \((s, \alpha)\) determined, we can evaluate the full target distribution \(g_{\boldsymbol{\theta}}(p, \sigma)\) via the determinant of the Jacobian of this inverse mapping. To obtain the marginal target distribution \(g_{\boldsymbol{\theta}}(\sigma)\) corresponding to the neural network, we must approximate the integral
\begin{equation}
  g_{\boldsymbol{\theta}}(\sigma) = \int_{L_{\min}}^{L_{\max}} g_{\boldsymbol{\theta}}(p,\sigma)\,\mathrm{d}p.
  \label{eq:marginal-integral}
\end{equation}
As this is a simple one-dimensional integral, we can approximate it using either a quadrature rule or some alternative method depending on whether we expect discontinuities or non-smoothness in the integrand. Based on this marginal distribution, we can then define a loss function:
\begin{equation}
\mathcal{L}(\boldsymbol{\theta}) = \int_{T_{\min}}^{T_{\max}} \left(g_{\boldsymbol{\theta}}(\sigma) - \hat{g}(\sigma)\right)^2\,\mathrm{d}\sigma.
\label{eq:main-loss-fn}
\end{equation}
This loss depends only on the prescribed marginal far-field target $\hat{g}(\sigma)$. The same is true of the mesh-based loss introduced in the next section, and of the deconvolution baseline of Section~\ref{sec:iterative-deconv}: none of the methods compared in this paper requires the full target distribution $\hat{g}(p,\sigma)$.

Once again, we cannot directly compute the above integral. Thus, we sample \(n_\sigma\) equally spaced points \(\sigma_i\) in the interval \([T_{\min}, T_{\max}]\) to approximate the integral in Eq.~\eqref{eq:main-loss-fn}, and use \(n_p\) equally spaced points \(p_j\) to approximate the integral in Eq.~\eqref{eq:marginal-integral} for each $\sigma_i$.

Both discretizations need some attention. The far-field count $n_\sigma$ must be large enough to resolve the prescribed $\hat{g}$; in our examples we match it to the binning used for ray-traced evaluation. The $p$-count requires more care, because the integrand $g_{\boldsymbol{\theta}}(p, \sigma)$ is supported only on the subset of $(p, \sigma)$ values whose inverse image $\mathbf{m}_{\boldsymbol{\theta}}^{-1}(p, \sigma)$ lies inside the source domain $\mathcal{S}$. The angular extent of the source controls the thickness of this support in $p$ for each $\sigma$; in the parallel-source limit ($\alpha_{\max} - \alpha_{\min} \to 0$) the support collapses to a one-dimensional curve and any finite discretization in $p$ generally misses it. For the extended sources considered here, a few dozen $p$-samples suffice; the empirical sensitivity of the method to both counts is examined in the hyperparameter sensitivity analysis below.

Using automatic differentiation (AD), we can compute both the Jacobian required for the derivation of \(g_{\boldsymbol{\theta}}(\sigma)\), as well as the gradient of the loss function. This allows us to employ gradient-based optimization techniques to minimize the loss and obtain an approximate solution to the finite-source problem. The specific optimizer we use here is the quasi-Newton method described in \cite{URBAN2025113656}.

\subsubsection{Mesh-based method}
\label{sec:mesh-method}
When the function \( f \) is continuous, the previously introduced loss function performs reliably. However, when \( f \) exhibits discontinuities---particularly at the boundaries of the domain---these discontinuities propagate into the loss function. As a consequence, standard gradient-based optimization algorithms are no longer applicable, due to the lack of differentiability. While one could resort to gradient-free optimization techniques such as evolutionary strategies, these methods typically suffer from slower convergence and reduced accuracy. To address this issue, we propose a reformulation of the loss function that ensures continuity, even when \( f \) is discontinuous.

\subsubsection*{Mesh-based integration approach}

Instead of directly integrating over the function \( g_{\boldsymbol{\theta}} \), we subdivide the target space into a regular grid of quadrilaterals, forming a mesh, see Figure~\ref{fig:mesh_transformation}(a). Let \( \{Q_j\}_{j=1}^P \) denote these quadrilateral cells in the target space. Each vertex \( \mathbf{y}_{jk} \in Q_j \) is mapped back to the source domain using the inverse map \( \mathbf{m}_{\boldsymbol{\theta}}^{-1} \), which is parameterized, for instance, by a neural network representing a reflector surface. Assuming the mapping \( \mathbf{m}_{\boldsymbol{\theta}}^{-1} \) is smooth, we approximate the image of each quadrilateral \( Q_j \) under this map by another quadrilateral \( \tilde{Q}_j \subset \mathcal{S} \), whose vertices are given by the mapped vertices \( \mathbf{x}_{jk} = \mathbf{m}_{\boldsymbol{\theta}}^{-1}(\mathbf{y}_{jk}) \). Figure~\ref{fig:mesh_transformation}(b) shows such a mapped mesh. 

We further assume that any discontinuities in the source distribution \( f(\mathbf{x}) \) occur only at the boundary of the source domain \( \mathcal{S} \). Then, for each mapped quadrilateral \( \tilde{Q}_j \), we compute its intersection \( \Omega_j = \tilde{Q}_j \cap \mathcal{S} \) with the source domain. Within this intersection \( \Omega_j \), we draw a fixed number of samples \( \{\mathbf{x}_{ji}\}_{i=1}^{N_j} \), and approximate the integral of \( f \) over \( \Omega_j \) via numerical quadrature:
\begin{equation}
\tilde{I}_j = \int_{\Omega_j} f(\mathbf{x}) \, \mathrm{d}\mathbf{x} \approx \frac{|\Omega_j|}{N_j} \sum_{i=1}^{N_j} f(\mathbf{x}_{ji}),
\end{equation}
where \( |\Omega_j| \) is the area of the intersection \( \Omega_j \), estimated using geometric algorithms (e.g., polygon intersection and area computation). This is shown in Figures~\ref{fig:mesh_transformation}(c) and (d). 

Since each \( \Omega_j \) corresponds to a unique target-space quadrilateral \( Q_j \), the resulting \( \tilde{I}_j \) values represent approximate integrals of the source distribution pushed forward to the target space, giving a binned two-dimensional approximation of the target distribution:
\begin{equation}
\tilde{g}_{\boldsymbol{\theta}}(\mathbf{y}) \approx \sum_{j=1}^P \tilde{I}_j \cdot \chi_{Q_j}(\mathbf{y}),
\end{equation}
where \( \chi_{Q_j} \) denotes the indicator function over \( Q_j \). This two-dimensional binned approximation is shown in Figure~\ref{fig:mesh_transformation}(e); summing \( \tilde{I}_j \) along the \( p \)-axis of the mesh yields the corresponding binned marginal far-field target distribution, shown in Figure~\ref{fig:mesh_transformation}(f).

To define a loss function that is continuous with respect to \( \boldsymbol{\theta} \), we compute a discrepancy metric---such as the mean squared error (MSE) or a Wasserstein-type distance---between this binned marginal \( \tilde{g}_{\boldsymbol{\theta}}(\sigma) \) and the desired far-field target distribution \( \hat{g}(\sigma) \), evaluated over the same \( \sigma \)-binning. Due to the smoothness of the inverse mapping \( \mathbf{m}_{\boldsymbol{\theta}}^{-1} \) and the use of integration over compact intersections \( \Omega_j \), this loss function is continuous with respect to \( \boldsymbol{\theta} \), even when \( f \) is discontinuous.

As in the direct method, two discretizations enter the mesh-based loss: the number $n_\sigma^{\mathrm{mesh}}$ of target-space cells along the $\sigma$-axis and the number $n_p^{\mathrm{mesh}}$ along the $p$-axis, giving an $n_p^{\mathrm{mesh}} \times n_\sigma^{\mathrm{mesh}}$ mesh. These count cells of the target-space grid and so are distinct from the sample counts $n_\sigma$, $n_p$ in the direct method. Unlike the direct method, the mesh integration does not degenerate when the support of $g_{\boldsymbol{\theta}}(p, \sigma)$ becomes thin in $p$: every cell $Q_j$ contributes the geometric intersection $\Omega_j = \tilde{Q}_j \cap \mathcal{S}$, which is either empty or a polygon of finite area. Excessively coarse $\sigma$-meshes can still fail, since the loss is then satisfied bin-by-bin without constraining the pointwise far field; the empirical sensitivity of the method to both counts is examined in the hyperparameter sensitivity analysis below.

\begin{figure}[htbp]
    \centering
    \includegraphics{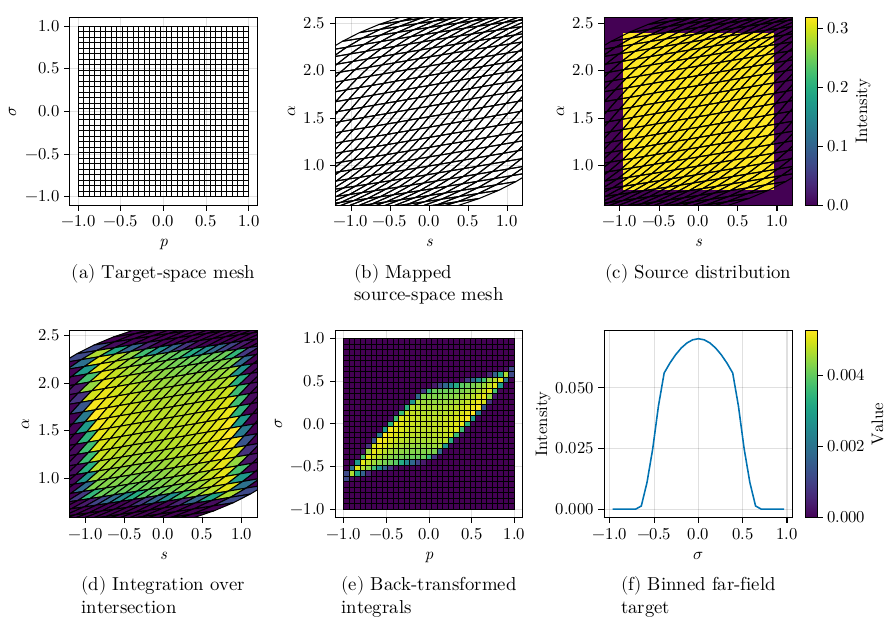}
    \caption{Mesh-based procedure for computing a continuous loss for inverse reflector design. (a) The target space is discretized into quadrilateral cells. (b) Each vertex is mapped to the source space. (c) The source distribution is evaluated. (d) Integration is performed over valid intersections with the source domain. (e) The integrals are assigned back to the original mesh cells. (f) A histogram of the far-field output is obtained.}
    \label{fig:mesh_transformation}
\end{figure}

\subsection{Iterative deconvolution}
\label{sec:iterative-deconv}
As a baseline against which to compare the neural-network solution of the previous section, we now adapt the deconvolution-based method of \cite{vithesis} to our finite-source setting. As noted in the introduction, this construction is not part of our contribution: it is a representative member of the established class of feedback/deconvolution illumination-design methods, and we include it to provide a meaningful and transparent point of comparison. The method requires a fast solver for an approximate version of the full problem, which is then used iteratively to refine the solution. The problem approximation we use here is the one described in Section~\ref{sec:finite-source-approx-problem}. We now derive an expression for the solution of this approximate problem.

\subsubsection*{ODE derivation}
We impose the constraint that the mapping derived from the reflector profile $u$ should be monotonically increasing. As such, the net flux from \(\,[L_{\min},\,s]\) must match that from \(\,[T_{\min},\,m(s)]\), as no flux after $s$ can be mapped before $m(s)$.  Equivalently,
\begin{equation}
\label{eq:flux-match-simple}
  \int_{\,L_{\min}}^{\,s} f(\tau)\,d\tau
  \;=\;
  \int_{\,T_{\min}}^{\,m(s)} g(\xi)\,d\xi.
\end{equation}
Differentiating both sides with respect to \(s\) gives
\[
  f(s)
  \;=\;
  g\!\bigl(m(s)\bigr)\,m'(s)
  \quad\Longrightarrow\quad
  m'(s)
  \;=\;
  \frac{\,f(s)\,}{\,g\bigl(m(s)\bigr)\!}.
\]
We also have \(m(L_{\min})=T_{\min}\).  Hence
\begin{equation}
\label{eq:flux-match2}
  \begin{cases}
    m'(s) \;=\; \dfrac{f(s)}{\,g\bigl(m(s)\bigr)\!},\\
    m(L_{\min}) \;=\; T_{\min}.
  \end{cases}
\end{equation}
Solving this ODE or root-finding $m(s)$ to satisfy Eq.~\eqref{eq:flux-match-simple} yields a monotonic mapping \(s \mapsto m(s)\) that redistributes the flux from \(f\) into \(g\). 

\subsubsection*{Law of reflection and the reflected direction}
For a ray emitted at $s$, we know that the intersection point will be \(\bigl(r_x(s),\,r_z(s)\bigr)\). We define the incident direction as
\begin{equation}
  \mathbf{s}(s)
  \;=\;
  \bigl(\cos\beta(s),\;\sin\beta(s)\bigr).
\end{equation}
To determine the reflection vector, we first compute the derivatives
\begin{align}
\begin{split}
\label{eq:reflector-derivs}
  r_x'(s)
    \;&=\; 
    1 - \beta'(s)\,\sin\beta(s)\,u(s) \;+\;\cos\beta(s)\,u'(s), \\
  r_z'(s)
    \;&=\;
    \beta'(s)\,\cos\beta(s)\,u(s) \;+\;\sin\beta(s)\,u'(s).
\end{split}
\end{align}
Let the tangent vector be \(\boldsymbol{\tau}(s)=(r_x'(s),\,r_z'(s))\) with norm
\(\|\boldsymbol{\tau}(s)\|=\sqrt{\,r_x'(s)^{2}+r_z'(s)^{2}\,}\).
The unit normal (pointing down) is
\begin{equation}
  \mathbf{n}(s)
  \;=\;
  \frac{1}{\,\|\boldsymbol{\tau}(s)\|\,}\,\bigl(r_z'(s),\,-r_x'(s)\bigr).
\end{equation}
By mirror reflection,
\begin{equation}
  \mathbf{t}(s)
  \;=\;
  \mathbf{s}(s)
  \;-\;
  2\,\bigl[\mathbf{s}(s)\cdot \mathbf{n}(s)\bigr]\,\mathbf{n}(s),
  \quad
  \mathbf{t}(s)
  = \bigl(t_x(s),\,t_z(s)\bigr).
\end{equation}
We then stereographically project:
\begin{equation}
  \sigma_{\mathrm{geom}}(s)
  \;=\;
  \frac{\,t_x(s)\,}{\,1 - t_z(s)\,}.
\end{equation}

We now want to find an expression for $u'(s)$ such that we can integrate (numerically) and obtain a $u(s)$ that realizes the mapping $m(s)$ derived from Eq.~\eqref{eq:flux-match-simple} or Eq.~\eqref{eq:flux-match2}.

\subsubsection*{Geometric derivation of $u'(s)$}
Given a particular $s$ and mapping $\sigma = m(s)$, we can compute the corresponding reflection direction as

\begin{equation}
  t_x = \frac{2\sigma}{\sigma^2 + 1},
  \quad
  t_z = \frac{\sigma^2 - 1}{\sigma^2 + 1}.
\end{equation}

We can also compute the emission direction as

\begin{equation}
  s_x = \cos\beta(s),
  \quad
  s_z = \sin\beta(s).
\end{equation}

Based on these two vectors, we can reconstruct the normal of the reflector at $s$ by averaging and rescaling the two vectors:

\begin{equation}
  \mathbf{n} = \frac{\mathbf{t} - \mathbf{s}}{\left\|\mathbf{t} - \mathbf{s}\right\|}.
\end{equation}

This normal gives us the scaled derivatives (\(\hat{r}_x' = -\mathbf{n}_z\), \(\hat{r}_z' = \mathbf{n}_x\)), which are related to the original derivatives as
\begin{equation}
r_x' = \kappa \hat{r}_x' \quad \text{and} \quad r_z' = \kappa \hat{r}_z'.
\end{equation}
This implies that multiplying the scaled derivatives by \(\kappa\) recovers the original derivatives.

Using Eq.~\eqref{eq:reflector-derivs}, we have the system
\begin{align}
\kappa \hat{r}_x'(s) &= -\sin(\beta(s)) \cdot \beta'(s) \cdot u(s) + \cos(\beta(s)) \cdot u'(s) + 1, \label{eq1} \\
\kappa \hat{r}_z'(s) &= \cos(\beta(s)) \cdot \beta'(s) \cdot u(s) + \sin(\beta(s)) \cdot u'(s), 
\label{eq2}
\end{align}

\noindent where $\kappa$ and $u'(s)$ are the unknowns. Finally, solving for $u'(s)$ we get

\begin{equation}
\label{eq:u-prime-geom}
    u'(s) = \frac{- \hat{r}_{x}'(s) \beta'(s) u \cos{\left(\beta(s) \right)} - \hat{r}_{z}'(s) \beta'(s) u \sin{\left(\beta(s) \right)} + \hat{r}_{z}'(s)}{\hat{r}_{x}'(s) \sin{\left(\beta(s) \right)} - \hat{r}_{z}'(s) \cos{\left(\beta(s) \right)}}. 
\end{equation}

\noindent We can then pick an arbitrary $h$ and formulate the initial value problem of solving Eq.~\eqref{eq:u-prime-geom} with $u(L_{\min}) = h$. Note that, depending on the choice of $h$, the resulting reflector profile $u$ may take on negative values, which is physically undesirable. To address this, an outer loop may be required to adjust $h$ until a nonnegative $u$ is obtained.

\subsubsection*{Integrating factors solution}
We can now substitute the expressions for $\hat{r}_x'(s)$ and $\hat{r}_z'(s)$ into Eq.~\eqref{eq:u-prime-geom} to obtain (again omitting explicit $(s)$ for readability)
\begin{equation}
    u' \;=\; \frac{- \beta' \sigma^{2} u \cos\beta \;+\; 2 \beta' \sigma u \sin\beta \;+\; \beta' u \cos\beta \;+\; \sigma^{2} \cos\beta \;-\; 2 \sigma \;+\; \cos\beta}{\sigma^{2} \sin\beta \;-\; \sigma^{2} \;+\; 2 \sigma \cos\beta \;-\; \sin\beta \;-\; 1}.
\end{equation}
We rewrite this in linear form
\begin{equation}
    u'(s) \;+\; a(s)\,u(s) \;=\; b(s),
\end{equation}
where
\begin{align}
    a(s) \,&=\, \frac{\beta'\!\left(- \sigma^{2} \cos\beta \;+\; 2 \sigma \sin\beta \;+\; \cos\beta\right)}{- \sigma^{2} \sin\beta \;+\; \sigma^{2} \;-\; 2 \sigma \cos\beta \;+\; \sin\beta \;+\; 1},\\[0.3em]
    b(s) \,&=\, \frac{- \sigma^{2} \cos\beta \;+\; 2 \sigma \;-\; \cos\beta}{- \sigma^{2} \sin\beta \;+\; \sigma^{2} \;-\; 2 \sigma \cos\beta \;+\; \sin\beta \;+\; 1}.
\end{align}
The integrating-factor solution is
\begin{equation}
\label{eq:approx-sol-expr}
    u(s) \;=\; \frac{1}{\mu(s)}\!\left[
        h \;+\; \int_{L_{\min}}^{s} b(\xi)\,\mu(\xi)\, \mathrm{d}\xi
    \right],
    \qquad
    \mu(s) \;=\; \exp\!\left( \int_{L_{\min}}^{s} a(\xi)\, \mathrm{d}\xi \right).
\end{equation}
We then approximate the integrals numerically (we use cumulative Simpson's integration). Moreover, if only the initial value $h$ is changed, the precomputed $\mu$ and the cumulative integral can be reused, which simplifies enforcing constraints such as $\min u \ge 0$ (see Fig.~\ref{fig:extended-source}).

\begin{figure}
    \centering
    \includegraphics{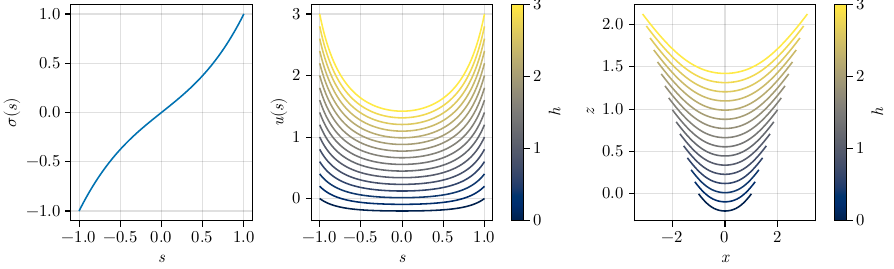}
    \caption{Reflectors for the mapping $m(s) = (2s + s^3)/3$ for different values of $h$ computed using Eq.~\eqref{eq:approx-sol-expr}. The left plot shows the mapping, the center plot shows the obtained functions $u$ for different values of $h$, and the right plot shows the corresponding reflectors in the $xz$-plane.}
    \label{fig:extended-source}
\end{figure}

\subsubsection*{Iterative procedure}
Now that we have a method for solving the approximate finite-source problem, we can derive a deconvolution algorithm for solving the full problem. Specifically, we define an operator $P\bigl[\tilde{g}\bigr]$ that we assume to be approximately a convolution, where $\tilde{g}$ denotes a `virtual' far-field target distribution used to attain our true desired target distribution $\hat{g}$. This operator performs three steps:

\begin{enumerate}
    \item It solves the approximated finite-source problem described, using the marginal distribution $f(s)$ of the original problem as the source distribution of the subproblem, and the input distribution $\tilde{g}$ as the target distribution of the subproblem. The source distribution integral is also adjusted such that it matches the given target distribution integral. $h$ is chosen such that the minimum of $u(s)$ is $h_{\min}>0$. 
    \item It ray-traces, based on the original finite-source system, the reflector computed in the first step. We specifically use quasi-Monte Carlo ray tracing, with $2^{19}$ rays and $64$ bins in the far-field target domain. 
    \item It normalizes and resamples the obtained ray-traced image, as ray-tracing gives us the approximated integrals over bins, whereas we are interested in the function values at the same sample points as $\tilde{g}$. 
\end{enumerate}

Our goal is to find a virtual target distribution $\tilde{g}$ such that $P\bigl[\tilde{g}\bigr]\approx \hat{g}$. The reflector computed by the operator $P$ is then considered the approximate solution to the full finite-source problem. We could use any generic deconvolution algorithm for this---as long as it requires only evaluations of $P\bigl[\tilde{g}\bigr]$---but we chose to use Van~Cittert deconvolution here, as it does not require us to divide by $P\bigl[\tilde{g}\bigr]$, which may be (close to) zero for some points. One downside of this algorithm is that it may result in partially negative target distributions. As such, we have modified the algorithm to clip the target distribution $\tilde{g}^{(n)}$ to be nonnegative. See Algorithm~\ref{alg:van-cittert} for the full deconvolution algorithm.

\begin{algorithm}
\SetAlgoLined
\KwData{
  Desired distribution $\hat{g}(\sigma)$;\\
  Forward operator $P(\cdot)$ (which ``convolves'' a candidate $\tilde{g}$ to the finite-source outcome);\\
  Initial guess $\tilde{g}^{(0)}(\sigma) \ge 0$;\\
  Learning rate $\eta \in (0,1]$;\\
  Maximum number of iterations $N$.
}
\KwResult{Updated distribution $\tilde{g}^{(N)}(\sigma)$ that approximates $\hat{g}(\sigma)$ under $P(\cdot)$.}

\For{$n \leftarrow 0$ \KwTo $N-1$}{
  \textbf{Step 1:} \emph{Compute residual} 
  $r^{(n)}(\sigma) \leftarrow \hat{g}(\sigma)\;-\;P\bigl[\tilde{g}^{(n)}\bigr](\sigma).$\\[4pt]
  
  \textbf{Step 2:} \emph{Tentative update} 
  $g_{\mathrm{temp}}^{(n+1)}(\sigma) 
    \;\leftarrow\;
    \tilde{g}^{(n)}(\sigma)\;+\;\eta\,r^{(n)}(\sigma).$\\[4pt]
  
  \textbf{Step 3:} \emph{Clip negativity} 
  $\tilde{g}^{(n+1)}(\sigma)
    \;\leftarrow\;
    \max\!\Bigl(0,\;g_{\mathrm{temp}}^{(n+1)}(\sigma)\Bigr)\,.$
}

\caption{Modified Van~Cittert deconvolution ensuring nonnegativity by clipping.}
\label{alg:van-cittert}
\end{algorithm}

\section{Numerical examples}
We now present four numerical examples. The first example uses a continuous source, hence we can use the method described in Section~\ref{sec:direct-method} and compare it against the deconvolution method of Section~\ref{sec:iterative-deconv}. For the second example, we consider a discontinuous source, and thus use the method described in Section~\ref{sec:mesh-method} instead. For the second pair of examples, we study the effect of constraining the height of the optimized reflector on the final accuracy for both the neural-network and deconvolution methods. 

To quantify the performance of both the neural-network method and the deconvolution, we ray-trace the reflectors obtained by both methods. We then compute the Normalized Mean Absolute Error (NMAE) between the ray-traced results and the desired target distribution. The NMAE is defined as

\begin{equation}
    \mathrm{NMAE} = \frac{\frac{1}{N}\sum_{i=1}^{N} \left| \hat{b}_i - b_i \right|}{\frac{1}{N}\sum_{i=1}^{N} \left| \hat{b}_i \right|}
\end{equation}

\noindent where $N$ is the total number of bins, $\hat{b}_i$ is the value of the $i$-th bin of the desired far-field target distribution, and $b_i$ is the value of the $i$-th ray-traced bin. The NMAE is a measure of how well the ray-traced results match the desired target distribution, with lower values indicating better performance.

\paragraph{Implementation and optimizer settings.}
All experiments use JAX with the Flax neural-network library, with float32 matmul precision, on a single NVIDIA RTX~4090 GPU. The optimizer is the self-scaled Broyden quasi-Newton method of \cite{URBAN2025113656} with a strong Wolfe zoom line search ($c_1 = 10^{-3}$, $c_2 = 0.7$). For the mesh-based method we use a two-phase warm start within a 16~s budget: the first third optimizes against a Gaussian-smoothed source, softening the boundary discontinuity to make the (already-continuous) mesh loss smoother and ease early convergence; the remainder uses the exact source.

\subsection*{Example A: Continuous source}
\defineproblem{A}{}
\begin{figure}[htbp]
  \centering

  \begin{subfigure}[t]{0.38\textwidth}
    \centering
    \includegraphics[width=\linewidth]{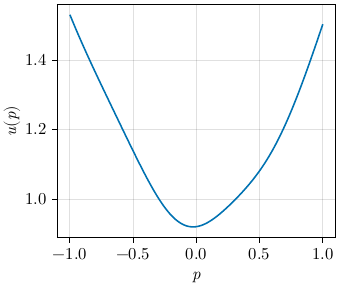}
    \caption{Ground-truth reflector}
    \label{fig:denver-reflector}
  \end{subfigure}
  \hspace{0.01\textwidth}
  \begin{subfigure}[t]{0.38\textwidth}
    \centering
    \includegraphics[width=\linewidth]{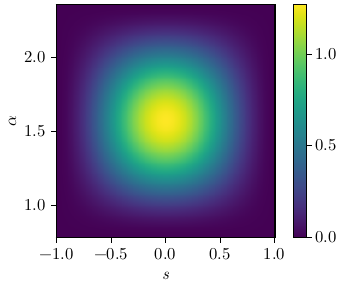}
    \caption{Source distribution}
    \label{fig:denver-source}
  \end{subfigure}

  \vspace{0.02\textheight}

  \begin{subfigure}[t]{0.38\textwidth}
    \centering
    \includegraphics[width=\linewidth]{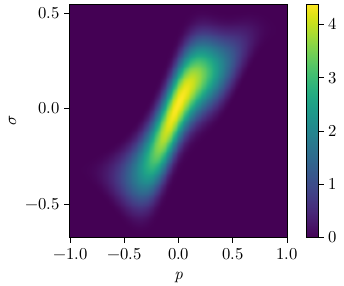}
    \caption{Target distribution derived from (a) and (b)}
    \label{fig:denver-target}
  \end{subfigure}
  \hspace{0.01\textwidth}
  \begin{subfigure}[t]{0.38\textwidth}
    \centering
    \includegraphics[width=\linewidth]{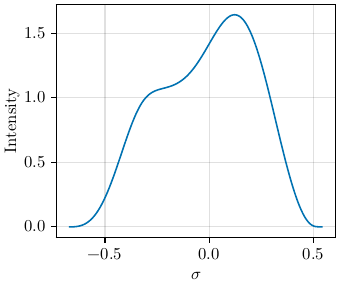}
    \caption{Marginal far-field target distribution derived from (c)}
    \label{fig:denver-target-marginal}
  \end{subfigure}

  \caption{Ground-truth reflector, source distribution, corresponding target distribution, and far-field target distribution for \protect\problemref{A}.}
  \label{fig:example-a-problem}
\end{figure}

In this example, we consider a continuous source defined over the interval $[L_{\min}, L_{\max}] = [-1, 1]$ with angular bounds $[\alpha_{\min}, \alpha_{\max}] = [\frac{\pi}{4}, \frac{3\pi}{4}]$. The source intensity decays smoothly to zero at the boundaries, enabling the application of the direct method described in Section~\ref{sec:direct-method}.

To ensure the existence of a solution, we generate a ground-truth reflector represented as a convex polyharmonic spline. From this, we derive the target distribution and the corresponding far-field target distribution, as illustrated in Figure~\ref{fig:example-a-problem}. By using a known ground-truth reflector, any errors in the computed solutions can be attributed solely to optimization inaccuracies rather than physical constraints of the system.

Under typical conditions, only the source distribution and the far-field target distribution are available. Accordingly, both the neural-network and deconvolution methods were provided access only to this information. Both methods utilized a grid of 64 samples in the far-field target domain $\Sigma$. For the neural-network method, we employed a multilayer perceptron (MLP) with two hidden layers, each containing 24 nodes, and used a squared hyperbolic tangent activation function for both layers. For the deconvolution method, we set the learning rate parameter $\eta = 0.5$.

The results of applying both methods are shown in Figure~\ref{fig:example-a-results}. The left panel displays the optimized reflectors for both methods together with the ground truth (dashed). The neural network's reflector closely matches the ground truth, while the deconvolution method exhibits significant deviations near the edges. The middle panel shows the pointwise absolute deviation $|u_{\mathrm{pred}}(p) - u_{\mathrm{gt}}(p)|$ for each method: the neural-network reflector stays near $10^{-4}$ across the domain, whereas the deconvolution reflector deviates by up to $\sim 6\times 10^{-2}$. Since the ground-truth reflector may not be unique, however, pointwise deviation alone does not necessarily indicate poor performance. We therefore also evaluate the ray-tracing error, shown in the right panel as a function of optimization time. This confirms that the final solution obtained by the deconvolution method is worse than that obtained by the neural network. Moreover, the neural-network algorithm here converges before the deconvolution method has performed even a single iteration. The corresponding best-NMAE values, time-to-best, and iteration counts are listed in Table~\ref{tab:example-ab-summary} alongside those of \problemref{B}.

\begin{figure}[htbp]
  \centering
  \begin{subfigure}[t]{0.32\textwidth}
    \centering
    \includegraphics[width=\linewidth]{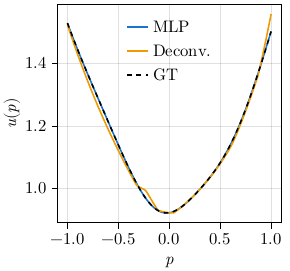}
    \caption{Optimized reflectors}
    \label{fig:denver-reflectors}
  \end{subfigure}\hspace{0.01\textwidth}%
  \begin{subfigure}[t]{0.32\textwidth}
    \centering
    \includegraphics[width=\linewidth]{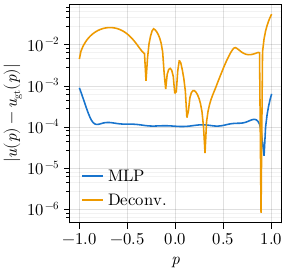}
    \caption{Absolute error vs.\ ground truth}
    \label{fig:denver-reflector-error}
  \end{subfigure}\hspace{0.01\textwidth}%
  \begin{subfigure}[t]{0.32\textwidth}
    \centering
    \includegraphics[width=\linewidth]{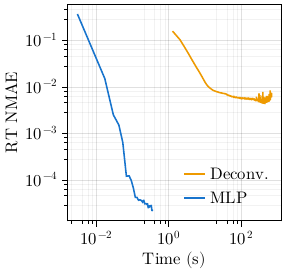}
    \caption{Ray-tracing NMAE vs.\ time}
    \label{fig:denver-convergence}
  \end{subfigure}

  \caption{Final reflectors, pointwise absolute reflector error, and convergence history for the neural-network and deconvolution methods applied to \protect\problemref{A}. In (a) the ground-truth reflector is drawn as a dashed black curve on top of the predictions so that the otherwise-coincident neural-network curve remains visible. Panel (b) gives the pointwise absolute deviation from ground truth.}
  \label{fig:example-a-results}
\end{figure}

\subsection*{Example B: Discontinuous source}
\defineproblem{B}{}
\begin{figure}[htbp]
  \centering

  \begin{subfigure}[t]{0.38\textwidth}
    \centering
    \includegraphics[width=\linewidth]{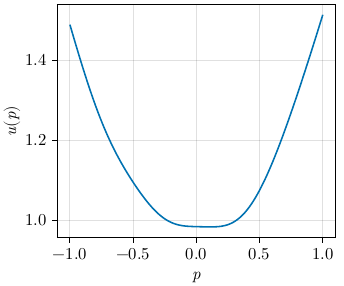}
    \caption{Ground-truth reflector}
    \label{fig:denver2-reflector}
  \end{subfigure}
  \hspace{0.01\textwidth}
  \begin{subfigure}[t]{0.38\textwidth}
    \centering
    \includegraphics[width=\linewidth]{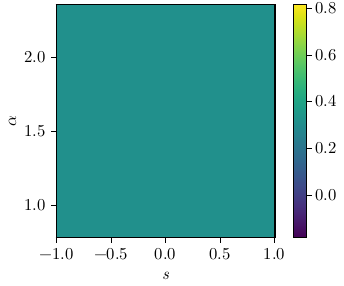}
    \caption{Source distribution}
    \label{fig:denver2-source}
  \end{subfigure}

  \vspace{0.02\textheight}

  \begin{subfigure}[t]{0.38\textwidth}
    \centering
    \includegraphics[width=\linewidth]{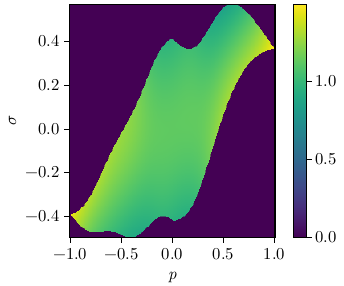}
    \caption{Target distribution derived from (a) and (b)}
    \label{fig:denver2-target}
  \end{subfigure}
  \hspace{0.01\textwidth}
  \begin{subfigure}[t]{0.38\textwidth}
    \centering
    \includegraphics[width=\linewidth]{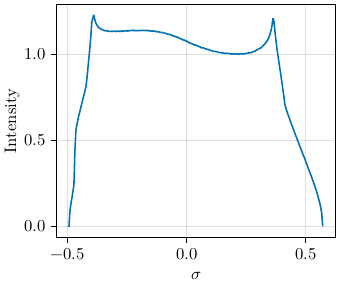}
    \caption{Marginal far-field target distribution derived from (c)}
    \label{fig:denver2-target-marginal}
  \end{subfigure}

  \caption{Ground-truth reflector, source distribution, corresponding target distribution, and far-field target distribution for \protect\problemref{B}.}
  \label{fig:example-b}
\end{figure}

For the second example, we construct a reflector problem by selecting a random ground-truth reflector and specifying a source. Unlike the first example, we use a uniform source, resulting in a discontinuity at the boundaries of the source domain, as illustrated in Figure~\ref{fig:example-b}.

As discussed in Section~\ref{sec:mesh-method}, the direct method applied in \problemref{A} is not suitable for this case due to the discontinuity. Instead, we employ the mesh-based method. All other parameters for both the neural-network and deconvolution methods remain identical to those in the first example, including a grid of 64 samples in the far-field target domain $\Sigma$, a multilayer perceptron with two hidden layers of 24 nodes each using a squared hyperbolic tangent activation function for the neural network, and a learning rate of $\eta = 0.5$ for the deconvolution method.

The results are shown in Figure~\ref{fig:example-b-results}. As in the first example, the neural-network method yields a more accurate solution and converges more rapidly than the deconvolution method. The pointwise reflector error in panel (b) shows that the neural-network reflector stays near $3\times 10^{-4}$ across the domain, whereas the deconvolution reflector deviates by up to $\sim 8\times 10^{-2}$---the larger error from the deconvolution baseline relative to \problemref{A} is consistent with the harder, discontinuous-source setting. Summary numerics for both examples are collected in Table~\ref{tab:example-ab-summary}.

\begin{figure}[htbp]
  \centering
  \begin{subfigure}[t]{0.32\textwidth}
    \centering
    \includegraphics[width=\linewidth]{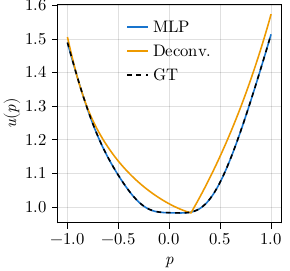}
    \caption{Optimized reflectors}
    \label{fig:denver2-reflectors}
  \end{subfigure}\hspace{0.01\textwidth}%
  \begin{subfigure}[t]{0.32\textwidth}
    \centering
    \includegraphics[width=\linewidth]{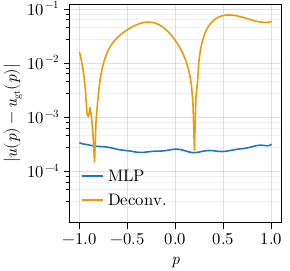}
    \caption{Absolute error vs.\ ground truth}
    \label{fig:denver2-reflector-error}
  \end{subfigure}\hspace{0.01\textwidth}%
  \begin{subfigure}[t]{0.32\textwidth}
    \centering
    \includegraphics[width=\linewidth]{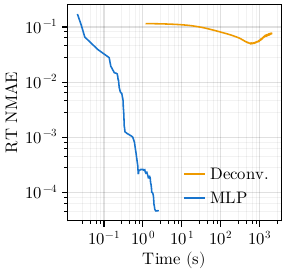}
    \caption{Ray-tracing NMAE vs.\ time}
    \label{fig:denver2-convergence}
  \end{subfigure}

  \caption{Final reflectors, pointwise absolute reflector error, and convergence history for the neural-network and deconvolution methods applied to \protect\problemref{B}. In (a) the ground-truth reflector is drawn as a dashed black curve on top of the predictions so that the otherwise-coincident neural-network curve remains visible. Panel (b) gives the pointwise absolute deviation from ground truth.}
  \label{fig:example-b-results}
\end{figure}

\begin{table}[htbp]
  \centering
  \caption{Summary of accuracy and computational cost for the neural-network (NN) and deconvolution (Deconv) methods on \protect\problemref{A} and \protect\problemref{B}. ``Best RT NMAE'' is the smallest ray-traced NMAE attained over the run, ``time-to-best'' is the wall-clock time at which that minimum is reached, ``Iterations'' is the corresponding number of accepted/applied updates, and ``Final loss'' is the value of $\mathcal{L}(\boldsymbol{\theta})$ at the end of training for the neural method (not defined for the deconvolution baseline). All runs are on a single NVIDIA RTX~4090 GPU.}
  \label{tab:example-ab-summary}
  \begin{tabular}{llllll}
    \hline
    Example & Method & Best RT NMAE & Time-to-best & Iterations & Final loss \\
    \hline
    A & NN     & $2.16\times 10^{-5}$ & $0.34$~s & $209$  & $1.29\times 10^{-11}$ \\
    A & Deconv & $4.44\times 10^{-3}$ & $410$~s  & $900$  & --- \\
    B & NN     & $4.65\times 10^{-5}$ & $2.54$~s & $278$  & $6.74\times 10^{-12}$ \\
    B & Deconv & $5.02\times 10^{-2}$ & $611$~s  & $3000$ & --- \\
    \hline
  \end{tabular}
\end{table}

\subsection*{Example C: Continuous source with height penalty}
\defineproblem{C}{}
In the previous two examples, the reflector's height was unconstrained. However, as the reflector is positioned further from the source, the problem increasingly resembles a point-source scenario, which (1) reduces the need for a complex finite-source approach and (2) may violate real-world physical constraints. Designers typically prefer a finite-source approach to avoid placing the reflector arbitrarily far from the source.

For the second pair of examples, we investigate the impact of reflector height on ray-tracing NMAE. We adopt the same ground-truth reflector, source distribution, and derived target distribution as in \problemref{A}, as shown in Figure~\ref{fig:example-a-problem}. To enforce a height constraint, we modify the neural network's loss function by adding the following term:
\begin{equation}
\label{eq:height-penalty}
    \mathcal{L}_{\mathrm{height}} = \left( \min_{p \in [L_{\min}, L_{\max}]} u(p) - h_{\min} \right)^2,
\end{equation}
where $h_{\min}$ is a user-defined minimum-height constraint. This term penalizes deviations of the reflector's minimum height from $h_{\min}$. For the deconvolution method, we set the initial height $h$ such that the reflector's minimum is at $h_{\min}$. Both methods are then evaluated across various $h_{\min}$ values, and their ray-tracing NMAE is compared.

The results are presented in Figure~\ref{fig:example-c-results}, which plots the NMAE for both methods as a function of $h_{\min}$. A vertical line marks the $h_{\min}$ corresponding to the ground-truth reflector. The neural-network method consistently outperforms the deconvolution method, quickly converging to an accurate solution at and above the ground-truth height. The deconvolution method improves when increasing the height, but never matches the performance of the neural network. Note, however, that the neural network generally performs worse than when unconstrained (see Figure~\ref{fig:example-a-results}), presumably as a result of the increased complexity of the loss function and optimization problem introduced by the additional loss term.

\begin{figure}
    \centering
    \includegraphics[width=0.49\linewidth]{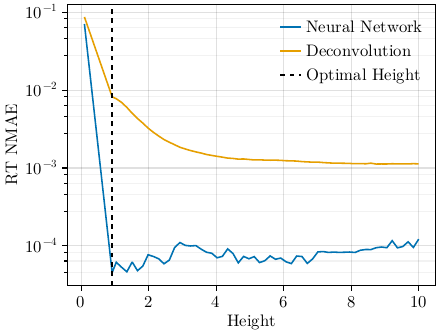}
    \caption{NMAE for the neural-network and deconvolution methods as a function of minimum reflector height $h_{\min}$ for \protect\problemref{C}.}
    \label{fig:example-c-results}
\end{figure}

\subsection*{Example D: Uniform target with height penalty}
\defineproblem{D}{}
In the previous examples, we utilized problems with a known ground-truth reflector. In practice, however, such a reflector is typically unavailable, making it uncertain whether a solution exists. For this example, we consider a problem without a known ground-truth reflector. Specifically, we use the same source distribution as in \problemref{A} but adopt a uniform far-field target distribution. 

We conduct experiments analogous to those in \problemref{C}, applying both the neural-network and deconvolution methods to this problem and comparing their performance across different values of the height constraint $h_{\min}$. The neural network incorporates the height constraint Eq.~\eqref{eq:height-penalty}, while the deconvolution method initializes the reflector height such that its minimum is at $h_{\min}$. The MLP architecture and deconvolution learning rate match those of \problemref{A}, but we use a denser 127-sample grid in the far-field target domain $\Sigma$ to resolve the uniform target.

The results are presented in Figure~\ref{fig:example-d-results}. We can see that the performance of both methods is generally much worse than in previous examples, which is to be expected as the target distribution used here is likely physically unattainable. Here, for the first time, the deconvolution method slightly outperforms the neural network for smaller values of $h_{\min}$, though the difference is small, and the speed of the neural-network method is still much greater (the same order of magnitude as in \problemref{A}). When we increase the height, the neural-network method starts performing better again, though the difference between both methods remains relatively small. This is likely the result of both methods already being close to the optimal error achievable by a reflector at that height.

One possible explanation for the crossover at low $h_{\min}$ is the different way the two methods enforce the height constraint. The deconvolution method satisfies $\min u = h_{\min}$ exactly by construction (through the choice of the initial value $h$ in the ODE), whereas the neural network relies on a soft penalty term (Eq.~\eqref{eq:height-penalty}), turning the optimization into a multi-objective problem. Balancing competing loss terms is a well-known challenge in physics-informed neural-network training\,\cite{Wang2021PINN}, and at low heights---where the constraint is tightly binding and the problem is already difficult due to strong finite-source blurring---this additional burden may slightly disadvantage the neural network. We note, however, that the effect is small and that the precise mechanism remains an open question.

\begin{figure}
    \centering
    \includegraphics[width=0.49\linewidth]{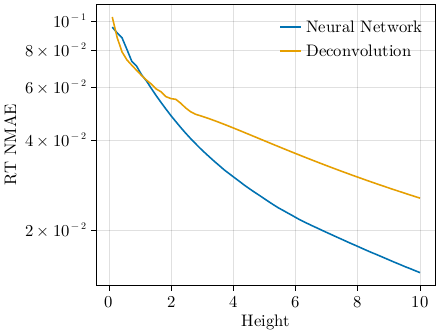}
        \caption{NMAE for the neural-network and deconvolution methods as a function of minimum reflector height $h_{\min}$ for \protect\problemref{D}.}
    \label{fig:example-d-results}
\end{figure}

\subsection*{Hyperparameter sensitivity}
To complement the single-run results above, we conduct a one-factor sensitivity sweep of the two neural-network methods over the four hyperparameters identified in Sections~\ref{sec:direct-method} and~\ref{sec:mesh-method} as the most relevant. For the direct method these are MLP depth, MLP width, the number of $\sigma$-samples $n_\sigma$ in the outer-integral discretization, and the number of $p$-samples $n_p$ in the inner-integral discretization. For the mesh method the analogous four are MLP depth, MLP width, the number of cells $n_\sigma^{\mathrm{mesh}}$ along the $\sigma$-axis of the target grid, and the number of cells $n_p^{\mathrm{mesh}}$ along the $p$-axis. For each method we re-use the corresponding example setup (\problemref{A} for the direct change-of-variables loss; \problemref{B} for the mesh-based loss) and run five seeds per setting, with five settings per axis: depth $\in \{1,2,3,4,5\}$; width $\in \{8, 16, 24, 32, 48\}$; for the direct method $n_\sigma \in \{16, 32, 64, 128, 256\}$ and $n_p \in \{8, 16, 32, 64, 128\}$; for the mesh method $n_\sigma^{\mathrm{mesh}}, n_p^{\mathrm{mesh}} \in \{16, 32, 64, 96, 128\}$. This gives 100 trials per method. Ray-tracing NMAE is evaluated with $2^{28}$ quasi-Monte Carlo rays and 63 bins.

\begin{figure}[htbp]
  \centering
  \includegraphics[width=0.99\textwidth]{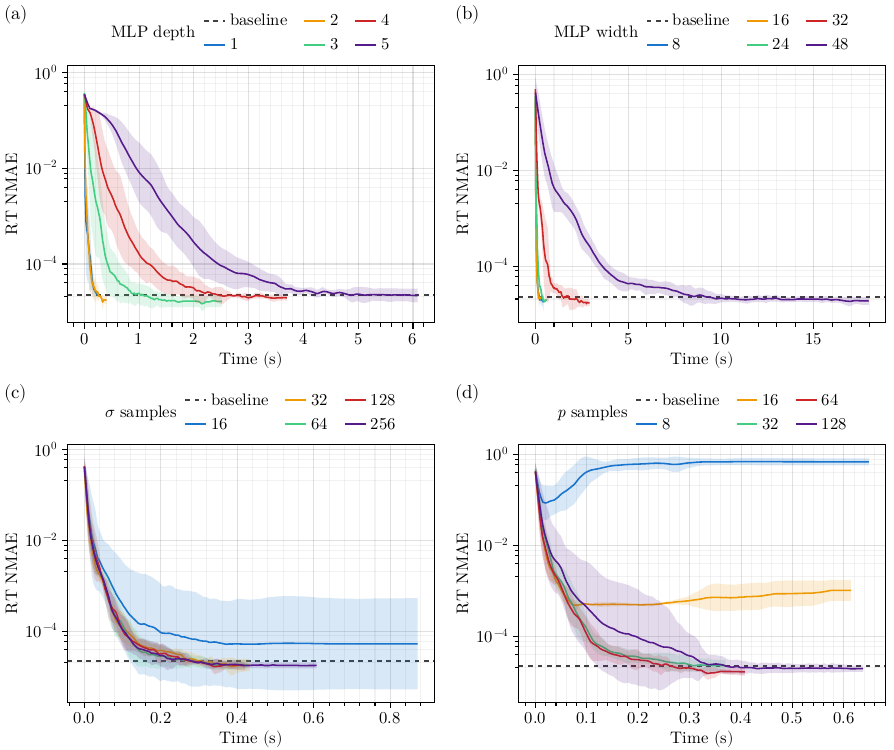}
  \caption{Hyperparameter sensitivity for the direct change-of-variables method on \protect\problemref{A}: ray-tracing NMAE as a function of optimization time for one-factor sweeps over (a)~MLP depth, (b)~MLP width, (c)~far-field sample count $n_\sigma$, and (d)~$p$-sample count $n_p$ (five settings $\times$ five seeds each). Solid curves are per-setting means in log space; shaded bands are 95\,\% confidence intervals. The dashed line is the single-run baseline from Figure~\ref{fig:example-a-results}.}
  \label{fig:cov-sensitivity}
\end{figure}

\begin{figure}[htbp]
  \centering
  \includegraphics[width=0.99\textwidth]{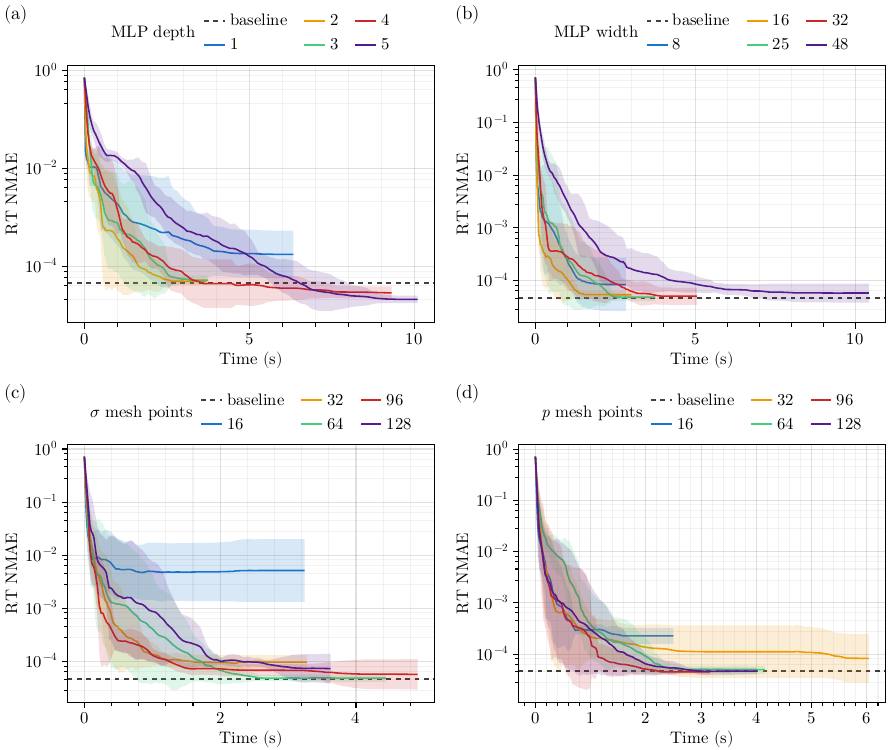}
  \caption{Hyperparameter sensitivity for the mesh-based method on \protect\problemref{B}: ray-tracing NMAE as a function of optimization time for one-factor sweeps over (a)~MLP depth, (b)~MLP width, (c)~$\sigma$-mesh resolution $n_\sigma^{\mathrm{mesh}}$, and (d)~$p$-mesh resolution $n_p^{\mathrm{mesh}}$ (five settings $\times$ five seeds each). Solid curves are per-setting means in log space; shaded bands are 95\,\% confidence intervals. The dashed line is the single-run baseline from Figure~\ref{fig:example-b-results}.}
  \label{fig:mesh-sensitivity}
\end{figure}

The direct change-of-variables method (Figure~\ref{fig:cov-sensitivity}) is essentially insensitive to network depth, network width, and $n_\sigma$ (panels (a), (b), (c)): across all settings tested, the final ray-tracing NMAE remains between roughly $1.6\times 10^{-5}$ and $2.5\times 10^{-5}$. Wall-clock time, by contrast, scales sharply with both depth and width---width-$8$ networks converge in $\sim 0.33$\,s and width-$48$ networks in $\sim 10.7$\,s, broadly consistent with the quadratic growth of the dense quasi-Newton Hessian update with the number of parameters. The $p$-sample count $n_p$ (panel (d)) is the only hyperparameter with a hard failure mode in this study: at $n_p \le 16$ the marginal integration in Eq.~\eqref{eq:marginal-integral} is too coarse to reconstruct the target, in line with the support-thickness argument given in Section~\ref{sec:direct-method}.

The mesh-based method (Figure~\ref{fig:mesh-sensitivity}) shows similar overall stability with two distinct trends. First, deeper networks (panel (a)) slightly improve the final NMAE (median $\sim 2\times 10^{-5}$ at depth $5$ versus $\sim 5\times 10^{-5}$ at depth $2$) with little change in wall-clock time, since per-iteration cost is dominated by the mesh-loss assembly rather than the network forward/backward. Second, $n_\sigma^{\mathrm{mesh}} = 16$ (panel (c)) is a catastrophic failure mode: all five seeds yield ray-tracing NMAE in the range $2$--$18 \times 10^{-3}$, the only such failure observed in either study. This is the mechanism anticipated in Section~\ref{sec:mesh-method}: when the $\sigma$-mesh is too coarse, the cell-aggregated marginal MSE can be satisfied without constraining the pointwise far field. The $n_p^{\mathrm{mesh}} = 16$ runs (panel (d)) are degraded but still usable. Taken together, both methods are robust across a wide range of hyperparameter settings and only fail when the discretization is too coarse to resolve the marginal target.

\section{Discussion}
The results presented in the previous section demonstrate the effectiveness of the neural-network-based method compared to the deconvolution baseline for solving finite-source reflector design problems. The neural-network approach consistently delivers more accurate solutions in less time. Both differentiable loss formulations---the direct change-of-variables loss and the mesh-based loss for discontinuous sources---prove effective when combined with the MLP parameterization and quasi-Newton optimization. Furthermore, the neural-network method enables optimization of the reflector's minimum height, as shown in Examples \problemreff{A} and \problemreff{B}, and supports the imposition of height constraints, as demonstrated in Examples \problemreff{C} and \problemreff{D}.

Two aspects of the methods help explain the gap in accuracy and speed. First, the neural methods minimize a differentiable scalar loss with a quasi-Newton optimizer that uses a strong Wolfe line search; the line search never accepts a step that increases the loss, so the training loss is monotonically non-increasing by construction. Since the loss is a faithful proxy for the ray-traced NMAE in all of our experiments, the NMAE follows the loss downward. The deconvolution iteration carries no such guarantee: it is a fixed-step Van~Cittert update with no built-in monotonicity in either the flux-balance residual or the ray-traced NMAE, and its inner approximate solver---a one-dimensional model that ignores the angular extent of the source---drives the reflector toward a fixed point of an approximate forward operator rather than the true one.

The wall-clock gap has a separate cause. Each neural iteration evaluates the closed-form inverse map, a small integration or mesh-assembly kernel, and an automatic-differentiation gradient, with no ray tracing in the loop. The deconvolution loop, in contrast, requires a high-budget ray trace at every step to evaluate the forward operator, and so pays a per-iteration cost orders of magnitude larger. Together with the monotonicity argument above, this explains both the lower final NMAE and the faster convergence observed in Examples~\problemreff{A}--\problemreff{D}. The two neural losses themselves are complementary: the mesh-based loss exists specifically to keep the loss continuous when the source is discontinuous, so that quasi-Newton remains applicable, while for smooth sources the simpler direct loss reaches comparable accuracy at lower per-iteration cost.

Extending the direct method from Section~\ref{sec:direct-method} to full three-dimensional applications is conceptually straightforward, as the mathematical framework can be readily adapted. Without modifications, the computational cost may increase significantly due to the higher dimensionality of the domains $\mathcal{S}$ and $\mathcal{T}$, which become four-dimensional in three-dimensional problems. Given the efficiency of the current two-dimensional implementation, the extension to three dimensions remains computationally feasible, though to further enhance scalability, stochastic optimization could be adopted: rather than evaluating all target points at each iteration, randomly subsampling these points reduces the per-iteration cost substantially while preserving effective descent. The quasi-Newton optimizer used here does not support stochasticity, necessitating alternative optimization strategies, such as those discussed in \cite{moritz2016linearly}. Common machine-learning optimizers, such as Adam \cite{kingma2017adammethodstochasticoptimization}, have proven inadequate for physics-informed neural-network training \cite{HACKING2025100119,jnini2024gaussnewtonnaturalgradientdescent,URBAN2025113656}, a category our approach might be placed under.

The mesh-based method from Section~\ref{sec:mesh-method}, while theoretically extendable to three dimensions, poses practical challenges. Computing intersections between quadrilaterals in two dimensions is manageable, but performing analogous operations in four-dimensional space is significantly more complex. To address this, alternative approaches could be considered, such as smoothing the source distribution with a gradually decreasing smoothing factor during optimization, employing optimization techniques for discontinuous loss functions \cite{Kreikemeyer_2023}, or developing formulations that inherently preserve continuity.

One natural application domain for the two-dimensional method presented here is the design of rotationally symmetric three-dimensional reflectors. In such systems, the two-dimensional reflector profile can be interpreted as a meridional cross-section, and the corresponding three-dimensional reflector is obtained by revolving this profile around the optical axis. Under the assumption that only meridional rays---those lying in planes containing the axis of symmetry---contribute to the far field, the three-dimensional design problem reduces to the two-dimensional problem solved here. However, this approximation neglects skew rays, which do not pass through the axis of symmetry and can carry significant energy, particularly for spatially extended sources. The accuracy of the meridional-only approximation therefore depends on the source geometry and the ratio of source extent to reflector distance.

To account for skew rays and improve the fidelity of rotationally symmetric designs, a hybrid iterative approach could be considered. The deconvolution baseline in this paper (Algorithm~\ref{alg:van-cittert}) already demonstrates the underlying principle: an approximate inner solver (the ODE-based method) produces a reflector from a simplified model, and Van~Cittert iteration adjusts a virtual target distribution to compensate for the approximation error, using ray tracing of the full two-dimensional finite-source model as the forward operator. The same iterative correction principle could bridge the gap between two and three dimensions: the two-dimensional MLP solver from Section~\ref{sec:ann} would serve as the inner solver, its output profile would be revolved to generate a rotationally symmetric three-dimensional reflector, and a full three-dimensional ray trace---including skew rays---would serve as the forward operator. Van~Cittert iteration would then update the virtual target for the next two-dimensional solve, progressively compensating for the error introduced by the meridional-only approximation. Because the two-dimensional MLP solver converges rapidly and produces accurate reflectors, it constitutes a strong inner solver for such an iterative scheme. Extending this idea beyond rotational symmetry to fully freeform three-dimensional reflectors would require parameterizing the reflector as a two-dimensional surface rather than a one-dimensional profile, and remains an open challenge.

\section{Conclusions}
We have presented a comprehensive comparison of two approaches for designing two-dimensional reflectors to transform light from a finite source into a prescribed far-field illumination pattern: a neural-network-based method using a multilayer perceptron (MLP) for reflector shape parameterization and a semi-analytical iterative deconvolution method based on a simplified finite-source approximation. Through numerical experiments, including cases with continuous and discontinuous source distributions, as well as unconstrained and height-constrained scenarios, the neural-network method consistently outperformed the deconvolution approach in terms of accuracy, as measured by the Normalized Mean Absolute Error (NMAE), and convergence speed. The neural network's flexibility in representing complex reflector geometries and its ability to incorporate constraints, such as minimum reflector height, make it particularly effective for addressing the challenges of finite-source reflector design.

The results also highlight the robustness of the neural-network method across diverse problem conditions, including discontinuities in the source distribution, where the mesh-based loss formulation ensures continuity of the loss function and enables effective optimization. In contrast, the deconvolution method, while computationally efficient for simplified approximations, struggles with accuracy and stability, particularly in the presence of source discontinuities. These findings suggest that the neural-network approach is better suited for applications requiring precise control over far-field illumination from finite sources.

Looking ahead, the differentiable framework developed here opens several avenues for future work. As discussed in the previous section, the formulation extends naturally to three-dimensional reflector design, and a hybrid iterative approach combining the two-dimensional MLP solver with full three-dimensional ray tracing could enable accurate design of rotationally symmetric reflectors that account for skew rays. Extension to fully freeform three-dimensional reflectors, stochastic optimization for scalability, and alternative loss formulations for discontinuous sources in higher dimensions remain open and promising directions.

\paragraph{Funding.} High Tech | TKI HSTM
\paragraph{Acknowledgment.} This work in the project MALIOD is funded by Holland High Tech | TKI HSTM via the PPS allowance scheme for public--private partnerships.
\paragraph{Disclosures.} The authors declare no conflicts of interest.
\paragraph{Data availability.} Data underlying the results presented in this paper are not publicly available at this time but may be obtained from the authors upon request.

\FloatBarrier

\bibliographystyle{iopart-num}
\bibliography{refs}

\newpage

\appendix

\section{Ray intersection guarantee}
\label{app:ray-intersection}

We show that the reflector parameterization of Section~\ref{sec:finite-source-problem} ensures that every emitted ray intersects the reflector curve, regardless of the choice of height function~$u$, provided only that $u$ is continuous and strictly positive.

Fix an arbitrary source point $s \in \Omega$ and consider the displacement from $(s, 0)$ to the reflector point $\mathbf{r}(p)$:
\begin{equation}
  \mathbf{r}(p) - \begin{bmatrix} s \\ 0 \end{bmatrix}
  \;=\;
  \begin{bmatrix}
    p - s + u(p)\cos\beta(p) \\[2pt]
    u(p)\sin\beta(p)
  \end{bmatrix}
  \;=:\;
  \begin{bmatrix} \Delta_x(s,p) \\[2pt] \Delta_z(p) \end{bmatrix}.
\end{equation}
We define the \emph{viewing angle} $\gamma_s(p) := \operatorname{atan2}\!\bigl(\Delta_z(p),\;\Delta_x(s,p)\bigr)$, i.e., the angle at which the source point $(s, 0)$ sees the reflector point $\mathbf{r}(p)$, measured counterclockwise from the positive $x$-axis. Since $u(p) > 0$ and $\beta(p) \in [\alpha_{\min}, \alpha_{\max}] \subset (0, \pi)$, the vertical component satisfies $\Delta_z(p) = u(p)\sin\beta(p) > 0$ for all $p \in \Omega$, and continuity of $u$ and $\beta$ ensures that $\gamma_s$ is continuous. We also note that, for any fixed $c > 0$, the function $\xi \mapsto \operatorname{atan2}(c, \xi)$ is strictly decreasing, as its derivative is $-c/(\xi^2 + c^2) < 0$.

At the left endpoint $p = L_{\min}$, we have $\beta(L_{\min}) = \alpha_{\max}$, giving $\Delta_z(L_{\min}) = u(L_{\min})\sin\alpha_{\max}$ and
\begin{equation}
  \Delta_x(s, L_{\min})
  \;=\;
  L_{\min} - s + u(L_{\min})\cos\alpha_{\max}
  \;\leq\;
  u(L_{\min})\cos\alpha_{\max},
\end{equation}
where the inequality follows from $s \geq L_{\min}$. When $s = L_{\min}$ there is equality, giving $\gamma_s(L_{\min}) = \alpha_{\max}$; for $s > L_{\min}$, $\Delta_x$ is strictly smaller while $\Delta_z$ is unchanged, so the monotonicity of $\operatorname{atan2}$ in its second argument gives $\gamma_s(L_{\min}) \geq \alpha_{\max}$. Analogously, at $p = L_{\max}$ we have $\beta(L_{\max}) = \alpha_{\min}$ and
\begin{equation}
  \Delta_x(s, L_{\max})
  \;=\;
  L_{\max} - s + u(L_{\max})\cos\alpha_{\min}
  \;\geq\;
  u(L_{\max})\cos\alpha_{\min},
\end{equation}
since $s \leq L_{\max}$, and the same argument yields $\gamma_s(L_{\max}) \leq \alpha_{\min}$.

Combining these bounds gives $[\alpha_{\min}, \alpha_{\max}] \subseteq [\gamma_s(L_{\max}),\; \gamma_s(L_{\min})]$. Since $\gamma_s$ is continuous on $[L_{\min}, L_{\max}]$, the Intermediate Value Theorem guarantees that for every $\alpha \in A$ there exists $p^{*} \in \Omega$ with $\gamma_s(p^{*}) = \alpha$, meaning
\begin{equation}
  \mathbf{r}(p^{*}) - \begin{bmatrix} s \\ 0 \end{bmatrix}
  \;=\;
  \lambda \begin{bmatrix} \cos\alpha \\ \sin\alpha \end{bmatrix}
\end{equation}
for some scalar $\lambda$. Since $\Delta_z(p^{*}) > 0$ and $\sin\alpha > 0$ (as $\alpha \in (0, \pi)$), we must have $\lambda > 0$, confirming that the ray from $(s, 0)$ in direction $(\cos\alpha, \sin\alpha)$ hits the reflector at $\mathbf{r}(p^{*})$. As $s$ and $\alpha$ were arbitrary, every emitted ray intersects the reflector curve.

\section{Scaling limit of the reflector}
\label{app:scaling-limit}

This appendix examines the behavior of the finite-source reflector problem when the height function is scaled uniformly, i.e., $u(p)$ is replaced by $\lambda u(p)$ for $\lambda > 0$. We show that, as $\lambda \to \infty$, the two-dimensional finite-source problem of Section~\ref{sec:finite-source-problem} reduces to a \emph{point-source-to-far-field reflector design problem}: the entire spatial extent of the source becomes invisible from the reflector, and the only quantity that matters is the angular marginal of the source distribution. Throughout, we assume $u \in C^1(\Omega)$ with $u > 0$ on $\Omega$, and write
\begin{equation}
  \mathbf{r}_\lambda(p)
  \;=\;
  \begin{bmatrix} p \\ 0 \end{bmatrix}
  \;+\;
  \lambda\, u(p)
  \begin{bmatrix} \cos\beta(p) \\ \sin\beta(p) \end{bmatrix}
\end{equation}
for the scaled reflector curve.

\begin{proposition}
\label{prop:scaling-limit}
Let $\sigma_\lambda(s,\alpha)$ denote the stereographic far-field coordinate produced by a ray emitted from $(s,0)$ at angle $\alpha$ and reflected by $\mathbf{r}_\lambda$. Assume that the limiting ray map $\sigma_\infty:A\to\Sigma$ takes values in a finite interval $\Sigma\subset\mathbb{R}$ and is a $C^1$ diffeomorphism. Then $\sigma_\lambda(s,\alpha)\to \sigma_\infty(\alpha)$ uniformly in $s$ as $\lambda\to\infty$, and the limiting far-field distribution satisfies
\begin{equation}
\label{eq:scaling-limit-result}
  g_\infty(\sigma)
  \;=\;
  F\!\bigl(\sigma_\infty^{-1}(\sigma)\bigr)\;
  \bigl|(\sigma_\infty^{-1})'(\sigma)\bigr|,
  \qquad
  F(\alpha) := \int_\Omega f(s,\alpha)\,\mathrm{d}s.
\end{equation}
\end{proposition}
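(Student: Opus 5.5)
The plan is to rescale the geometry by $1/\lambda$. Dividing all positions by $\lambda$ turns $\mathbf{r}_\lambda$ into
\[
\tilde{\mathbf{r}}_\lambda(p) = \frac{1}{\lambda}\begin{bmatrix} p \\ 0\end{bmatrix} + u(p)\begin{bmatrix}\cos\beta(p)\\ \sin\beta(p)\end{bmatrix},
\]
and moves the source point to $(s/\lambda,0)$. Reflection directions, and hence $\sigma$, are invariant under this scaling. As $\lambda\to\infty$ the rescaled curve converges in $C^1(\Omega)$ to the point-source curve
\[
\mathbf{r}_\infty(p) = u(p)\,(\cos\beta(p),\sin\beta(p)).
\]
The errors are $O(1/\lambda)$ in $C^0$, and $O(1/\lambda)$ in the derivative, since $\tilde{\mathbf{r}}_\lambda' - \mathbf{r}_\infty' = (1/\lambda,0)$. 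Every source point $(s/\lambda,0)$ converges to the origin uniformly in $s\in\Omega$, because $\Omega$ is bounded. The limit problem is a point source at the origin with reflector $\mathbf{r}_\infty$. Since $\beta$ is a bijection from $\Omega$ onto $A$, the ray at angle $\alpha$ hits $\mathbf{r}_\infty$ at the unique parameter $p_\infty(\alpha)=\beta^{-1}(\alpha)$. This gives the limiting ray map $\sigma_\infty(\alpha)$.

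The first step is uniform convergence of the intersection parameter. For the rescaled problem, the ray from $(s/\lambda,0)$ at angle $\alpha$ hits at $p_\lambda(s,\alpha)$ with $\gamma^{(\lambda)}_{s}(p_\lambda)=\alpha$, where $\gamma^{(\lambda)}_s$ is the viewing angle from Appendix~\ref{app:ray-intersection}, which guarantees existence. The limiting viewing angle is $\gamma^{(\infty)}(p)=\beta(p)$, which is strictly decreasing with $|\beta'|$ a positive constant. Since $\Delta_z=u\sin\beta$ is bounded below by a positive constant on $\Omega$, $\operatorname{atan2}$ is Lipschitz in a neighbourhood of the data. This gives $\sup_{s,p}|\gamma^{(\lambda)}_s(p)-\beta(p)| = O(1/\lambda)$. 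Strict monotonicity of $\beta$ with slope bounded away from zero then yields
\[
|\beta(p_\lambda)-\alpha| = O(1/\lambda),
\]
so $p_\lambda\to\beta^{-1}(\alpha)$ uniformly in $(s,\alpha)$. This holds even if the intersection for finite $\lambda$ is not unique: any root lies within $O(1/\lambda)$. The normal $\mathbf{n}$ depends only on the tangent at $p$, and it converges uniformly because $u\in C^1$, $\tilde{\mathbf{r}}'_\lambda\to\mathbf{r}'_\infty$ uniformly, and $\mathbf{r}_\infty'$ is nonvanishing (its component along $\mathbf{r}_\infty$ is $u'$ and its perpendicular component is $u\beta'\neq0$). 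Reflection and stereographic projection are then continuous. The one delicate point is that $\sigma = t_x/(1-t_z)$ blows up as $t_z\to1$. Because $\sigma_\infty$ takes values in the finite interval $\Sigma$, the limit $t_z$ stays bounded away from $1$ on the compact set $A$, so projection is uniformly continuous there. This gives $\sigma_\lambda\to\sigma_\infty$ uniformly in $s$, and also in $\alpha$.

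The second step is the far-field distribution. Write $g_\lambda$ as the pushforward of $f\,\mathrm{d}s\,\mathrm{d}\alpha$ under $\sigma_\lambda$, and test it against a continuous $\varphi$:
\[
\int_\Sigma \varphi\, g_\lambda\,\mathrm{d}\sigma = \int_{\mathcal{S}} \varphi(\sigma_\lambda(s,\alpha))\, f(s,\alpha)\,\mathrm{d}s\,\mathrm{d}\alpha.
\]
Uniform convergence and dominated convergence (with $f$ integrable and $\varphi$ bounded) give the limit
\[
\int_{\mathcal{S}} \varphi(\sigma_\infty(\alpha))\, f\,\mathrm{d}s\,\mathrm{d}\alpha = \int_A \varphi(\sigma_\infty(\alpha))\, F(\alpha)\,\mathrm{d}\alpha.
\]
Because $\sigma_\infty$ is a $C^1$ diffeomorphism, the one-dimensional change of variables $\alpha=\sigma_\infty^{-1}(\sigma)$ turns this into $\int_\Sigma \varphi(\sigma)\, F(\sigma_\infty^{-1}(\sigma))\,|(\sigma_\infty^{-1})'(\sigma)|\,\mathrm{d}\sigma$. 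This identifies $g_\infty$ as in \eqref{eq:scaling-limit-result}, in the sense of weak convergence of measures. Pointwise convergence of densities is not claimed, and I would state the result in this weak sense.

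I expect the main obstacle to be the first step: making the convergence of $p_\lambda$ and of the normal uniform, and controlling the stereographic singularity. The nonvanishing of $\mathbf{r}_\infty'$ and the finiteness of $\Sigma$ carry that argument. The rest is standard.
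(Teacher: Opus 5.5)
Your proposal is correct and follows essentially the same route as the paper. Rescaling by $1/\lambda$ (equivalently, dividing the $\operatorname{atan2}$ arguments by $\lambda$) gives $\gamma_s^{(\lambda)}\to\beta$ uniformly, and the constant slope of $\beta$ then gives $p_\lambda\to\beta^{-1}(\alpha)$; the normal and reflected direction converge; and a test-function argument with dominated convergence, Fubini, and the one-dimensional change of variables finishes it. Your explicit $O(1/\lambda)$ rates, the remark that every intersection root converges, and the compactness argument keeping $t_z$ away from $1$ tighten steps the paper treats only briefly.
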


\begin{proof}
The proof proceeds in three stages.

\medskip
\noindent\textit{Stage 1: Intersection parameter.}
The viewing angle from $(s, 0)$ to $\mathbf{r}_\lambda(p)$ is
\begin{equation}
  \gamma_s^{(\lambda)}(p)
  \;=\;
  \operatorname{atan2}\!\Bigl(
    u(p)\sin\beta(p),\;
    \tfrac{p - s}{\lambda} + u(p)\cos\beta(p)
  \Bigr),
\end{equation}
after dividing both arguments by $\lambda$. Since $\beta(p) \in [\alpha_{\min}, \alpha_{\max}] \subset (0, \pi)$ and $u > 0$, the first argument $u(p)\sin\beta(p)$ is bounded below by a positive constant on $\Omega$, so the argument pair remains bounded away from the origin uniformly in $s$, $p$, and $\lambda$. As $|p - s| \leq L_{\max} - L_{\min}$, the term $(p - s)/\lambda \to 0$ uniformly, and continuity of $\operatorname{atan2}$ away from the origin yields
\begin{equation}
\label{eq:gamma-limit-app}
  \gamma_s^{(\lambda)}(p) \;\to\; \beta(p),
  \qquad \text{uniformly in } s, p \in \Omega.
\end{equation}

Let $p_\lambda^*(s, \alpha)$ satisfy $\gamma_s^{(\lambda)}(p_\lambda^*) = \alpha$. The limiting equation $\beta(p) = \alpha$ has the unique solution $p_\infty^* = \beta^{-1}(\alpha)$, since $\beta$ is linear with nonzero slope $\beta' = (\alpha_{\min} - \alpha_{\max})/(L_{\max} - L_{\min})$. For any $\varepsilon > 0$ and $\lambda$ sufficiently large, $\|\gamma_s^{(\lambda)} - \beta\|_\infty < \varepsilon$, so that
\begin{equation}
  |\beta'|\,|p_\lambda^* - p_\infty^*|
  \;\leq\;
  |\beta(p_\lambda^*) - \gamma_s^{(\lambda)}(p_\lambda^*)|
  \;<\; \varepsilon,
\end{equation}
and thus $p_\lambda^*(s, \alpha) \to \beta^{-1}(\alpha)$ uniformly in $s$.

\medskip
\noindent\textit{Stage 2: Reflected direction.}
The unit incident direction from $(s, 0)$ to $\mathbf{r}_\lambda(p_\lambda^*)$ satisfies
\begin{equation}
  \hat{\mathbf{d}}_\lambda
  \;=\;
  \frac{1}{\|\mathbf{r}_\lambda(p_\lambda^*) - (s, 0)^\top\|}
  \begin{bmatrix}
    p_\lambda^* - s + \lambda u(p_\lambda^*)\cos\beta(p_\lambda^*) \\[3pt]
    \lambda u(p_\lambda^*)\sin\beta(p_\lambda^*)
  \end{bmatrix}.
\end{equation}
Dividing numerator and denominator by $\lambda$ and applying $p_\lambda^* \to \beta^{-1}(\alpha)$ gives $\hat{\mathbf{d}}_\lambda \to (\cos\alpha,\, \sin\alpha)^\top$, uniformly in $s$.

The tangent to $\mathbf{r}_\lambda$ at $p$ is
\begin{equation}
  \mathbf{r}_\lambda'(p)
  \;=\;
  \begin{bmatrix} 1 \\ 0 \end{bmatrix}
  + \lambda \left[
    u'(p) \begin{bmatrix} \cos\beta \\ \sin\beta \end{bmatrix}
    + u(p)\,\beta' \begin{bmatrix} -\sin\beta \\ \cos\beta \end{bmatrix}
  \right].
\end{equation}
Upon dividing by $\lambda$, the $(1, 0)^\top$ term vanishes in the limit, so the unit normal $\hat{\mathbf{n}}_\lambda(p_\lambda^*)$ converges to
\begin{equation}
\label{eq:limit-normal-app}
  \hat{\mathbf{n}}_\infty(\alpha)
  \;=\;
  \frac{1}{\sqrt{u'(p)^2 + u(p)^2 \beta'(p)^2}}
  \begin{bmatrix}
    u'(p)\sin\beta(p) + u(p)\beta'(p)\cos\beta(p) \\[3pt]
    -u'(p)\cos\beta(p) + u(p)\beta'(p)\sin\beta(p)
  \end{bmatrix}
  \bigg|_{p\,=\,\beta^{-1}(\alpha)}\!,
\end{equation}
where the denominator is strictly positive since $u > 0$ and $\beta' \neq 0$. The reflection formula $\mathbf{t} = \hat{\mathbf{d}} - 2\langle \hat{\mathbf{d}}, \hat{\mathbf{n}} \rangle \hat{\mathbf{n}}$ and the stereographic projection $\sigma = t_x/(1 - t_z)$ are both continuous in their arguments (the latter away from $t_z = 1$), so
\begin{equation}
\label{eq:sigma-limit-app}
  \sigma_\lambda(s, \alpha) \;\to\; \sigma_\infty(\alpha),
  \qquad \text{uniformly in } s \in \Omega.
\end{equation}

\medskip
\noindent\textit{Stage 3: Limiting distribution.}
Energy conservation gives, for any $\varphi \in C_c(\Sigma)$,
\begin{equation}
  \int_\Sigma g_\lambda(\sigma)\,\varphi(\sigma)\,\mathrm{d}\sigma
  \;=\;
  \int_\Omega \int_A f(s, \alpha)\,
  \varphi\!\bigl(\sigma_\lambda(s, \alpha)\bigr)\,\mathrm{d}\alpha\,\mathrm{d}s.
\end{equation}
By Eq.~\eqref{eq:sigma-limit-app}, the integrand converges pointwise to $f(s, \alpha)\,\varphi(\sigma_\infty(\alpha))$, and is dominated by the integrable function $|f(s, \alpha)|\,\|\varphi\|_\infty$, whose integral over $\Omega \times A$ is $\|f\|_{L^1(\mathcal{S})}\|\varphi\|_\infty < \infty$. The dominated convergence theorem and Fubini's theorem (exploiting the $s$-independence of $\sigma_\infty$) yield
\begin{equation}
  \lim_{\lambda \to \infty}
  \int_\Sigma g_\lambda(\sigma)\,\varphi(\sigma)\,\mathrm{d}\sigma
  \;=\;
  \int_A F(\alpha)\,\varphi\!\bigl(\sigma_\infty(\alpha)\bigr)\,\mathrm{d}\alpha.
\end{equation}
If $\sigma_\infty$ is a $C^1$ diffeomorphism, the substitution $\sigma = \sigma_\infty(\alpha)$ gives
\begin{equation}
  \int_A F(\alpha)\,\varphi\!\bigl(\sigma_\infty(\alpha)\bigr)\,\mathrm{d}\alpha
  \;=\; \int_\Sigma
    F\!\bigl(\sigma_\infty^{-1}(\sigma)\bigr)\,
    \bigl|(\sigma_\infty^{-1})'(\sigma)\bigr|\,
    \varphi(\sigma)\,\mathrm{d}\sigma.
\end{equation}
As this holds for all $\varphi \in C_c(\Sigma)$, the identity in Eq.~\eqref{eq:scaling-limit-result} follows.
\end{proof}

\noindent Eq.~\eqref{eq:scaling-limit-result} is precisely the energy-conservation relation for a \emph{two-dimensional point-source-to-far-field single-reflector} system: a point source at the origin with angular intensity $F(\alpha)$, and a reflector curve parameterized by a polar function $\rho(\alpha)$, designed so that the reflected light produces a prescribed far-field distribution $\hat{g}(\sigma)$. We now formulate this limiting design problem explicitly and show how it reduces to a pair of ODEs.

\begin{corollary}[Point-source design problem]
\label{cor:point-source-problem}
In the limit $\lambda \to \infty$, the finite-source design problem reduces to the following: given a point source at the origin with angular emission profile $F(\alpha) = \int_\Omega f(s, \alpha)\,\mathrm{d}s$ for $\alpha \in A$, find a reflector curve
\begin{equation}
\label{eq:polar-reflector}
  \mathbf{r}_\infty(\alpha) = \rho(\alpha) \begin{bmatrix} \cos\alpha \\ \sin\alpha \end{bmatrix}, \qquad \rho(\alpha) > 0,
\end{equation}
such that the far-field distribution after reflection equals a prescribed $\hat{g}(\sigma)$.
\end{corollary}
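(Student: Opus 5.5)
The corollary is essentially a reinterpretation of Proposition~\ref{prop:scaling-limit}, so the plan is to identify the limiting ray geometry with that of a point source at the origin, and then check that the energy balance~\eqref{eq:scaling-limit-result} is the point-source one. First, rescale the reflector: set $\tilde{\mathbf{r}}_\lambda(p) := \lambda^{-1}\mathbf{r}_\lambda(p) = (p/\lambda,0)^\top + u(p)(\cos\beta(p),\sin\beta(p))^\top$. Uniform scaling of the whole system leaves reflected directions, and hence far-field coordinates $\sigma$, unchanged. After this scaling the source segment $\Omega\times\{0\}$ becomes $\lambda^{-1}\Omega\times\{0\}$, which collapses to the origin. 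The curve $\tilde{\mathbf{r}}_\lambda$ converges in $C^1$ to $p\mapsto u(p)(\cos\beta(p),\sin\beta(p))^\top$, because the extra terms $(p/\lambda,0)$ and $(1/\lambda,0)$ vanish uniformly. Reparameterizing by $\alpha=\beta(p)$ is legitimate because $\beta$ is affine with nonzero slope, so $p=\beta^{-1}(\alpha)$ is a diffeomorphism $A\to\Omega$. This gives exactly the polar form~\eqref{eq:polar-reflector} with $\rho(\alpha):=u(\beta^{-1}(\alpha))>0$.

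Second, check that $\sigma_\infty(\alpha)$ from Proposition~\ref{prop:scaling-limit} is the far-field coordinate that the polar reflector $\rho$ assigns to a ray leaving the origin at angle $\alpha$. Stage~1 of that proof shows the ray hits at $p=\beta^{-1}(\alpha)$, i.e.\ at $\rho(\alpha)(\cos\alpha,\sin\alpha)$. Stage~2 gives incident direction $(\cos\alpha,\sin\alpha)$. It also gives normal~\eqref{eq:limit-normal-app}, which we must match with the normal of the polar curve. Using $d\alpha/dp=\beta'$, we have $\rho'(\alpha)=u'(p)/\beta'$. The tangent of the polar curve is $\rho'(\cos\alpha,\sin\alpha)+\rho(-\sin\alpha,\cos\alpha)$, which is $1/\beta'$ times $u'(\cos\beta,\sin\beta)+u\beta'(-\sin\beta,\cos\beta)$, the limit tangent. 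Hence the normals agree up to sign, and reflection is insensitive to the sign of $\mathbf{n}$.

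Third, conclude with the energy balance. By Proposition~\ref{prop:scaling-limit}, the limiting far field is $g_\infty(\sigma)=F(\sigma_\infty^{-1}(\sigma))|(\sigma_\infty^{-1})'(\sigma)|$, the pushforward of the angular density $F$ under $\sigma_\infty$. For a point source with intensity $F$ and reflector $\rho$, this is precisely the pushforward condition. Therefore requiring $g_\infty=\hat g$ is the point-source design problem as stated. The two formulations are equivalent because any admissible $\rho$ yields $u=\rho\circ\beta$ and conversely.

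The main obstacle is the normal identification in the second step: the orientation and sign conventions, and the role of the factor $1/\beta'<0$, need careful tracking. I would handle this by noting that the reflection formula depends only on $\mathbf{n}\mathbf{n}^\top$. Beyond that, the corollary adds no analytic difficulty, since the uniform convergence and the diffeomorphism hypotheses are inherited directly from Proposition~\ref{prop:scaling-limit}.
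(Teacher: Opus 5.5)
Your proposal is correct and follows essentially the paper's route. You identify $\rho=u\circ\beta^{-1}$ and discard the base-line offset $(p,0)^\top$, which your $1/\lambda$ rescaling makes precise; you then match the limiting normal of Eq.~\eqref{eq:limit-normal-app} to the polar normal via $u'=\rho'\beta'$, and read the point-source energy balance off Eq.~\eqref{eq:scaling-limit-result}. Your sign bookkeeping is slightly more careful than the paper's remark that ``the factors of $|\beta'|$ cancel'': strictly, that substitution leaves a factor $\beta'/|\beta'|=-1$, which is harmless because reflection depends only on $\mathbf{n}\mathbf{n}^\top$.
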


\noindent The relationship between Eq.~\eqref{eq:polar-reflector} and the original parameterization is $\rho(\alpha) = u(\beta^{-1}(\alpha))$, with the spoke direction $\beta(p)$ evaluated at $p = \beta^{-1}(\alpha)$ reducing to $\alpha$ itself. The base-line offset $(p, 0)^\top$ in the original parameterization becomes negligible relative to $\lambda u$ as $\lambda \to \infty$, so the reflector geometry is that of a polar curve centered at the source.

This point-source problem is solved by two sequential ODEs.

\paragraph{ODE 1: Ray map.}
The monotonic ray map $\sigma_\infty : A \to \Sigma$ achieving the prescribed $\hat{g}(\sigma)$ is determined by the flux-balance condition $\int_{\alpha_{\min}}^{\alpha} F\,\mathrm{d}\alpha' = \int_{T_{\min}}^{\sigma_\infty(\alpha)} \hat{g}\,\mathrm{d}\sigma$, which upon differentiation gives
\begin{equation}
\label{eq:limit-ray-map-ode}
  \sigma_\infty'(\alpha)
  = \frac{F(\alpha)}{\hat{g}\!\bigl(\sigma_\infty(\alpha)\bigr)},
  \qquad
  \sigma_\infty(\alpha_{\min}) = T_{\min}.
\end{equation}

\paragraph{ODE 2: Reflector profile.}
Once the ray map $\sigma_\infty(\alpha)$ is known, the reflector profile $\rho(\alpha)$ is recovered from the law of reflection. From the chain rule, $u' = \rho'\beta'$, and substituting into Eq.~\eqref{eq:limit-normal-app}, the factors of $|\beta'|$ cancel. The limiting normal in terms of $\rho$ and $\rho'$ is
\begin{equation}
  \hat{\mathbf{n}}_\infty(\alpha)
  = \frac{1}{\sqrt{\rho'^2 + \rho^2}}
  \begin{bmatrix}
    \rho'\sin\alpha + \rho\cos\alpha \\[2pt]
    -\rho'\cos\alpha + \rho\sin\alpha
  \end{bmatrix}.
\end{equation}
With $\hat{\mathbf{d}} = (\cos\alpha, \sin\alpha)^\top$, one computes $\langle \hat{\mathbf{d}}, \hat{\mathbf{n}}_\infty \rangle = \rho / \sqrt{\rho'^2 + \rho^2}$, and the reflected direction becomes
\begin{equation}
\label{eq:limit-reflected}
  \mathbf{t}_\infty
  = \begin{bmatrix} \cos\alpha \\ \sin\alpha \end{bmatrix}
    - \frac{2\rho}{\rho'^2 + \rho^2}
    \begin{bmatrix} \rho'\sin\alpha + \rho\cos\alpha \\[2pt] -\rho'\cos\alpha + \rho\sin\alpha \end{bmatrix}.
\end{equation}
Imposing $t_x/(1 - t_z) = \sigma_\infty(\alpha)$ and solving for $\rho'$ yields a first-order ODE $\rho' = \Phi(\alpha, \rho; \sigma_\infty)$, to be integrated from $\rho(\alpha_{\min}) = h$ for a chosen $h > 0$. This is structurally identical to the integrating-factor approach of Eq.~\eqref{eq:approx-sol-expr}.

\paragraph{Comparison with the approximate problem.}
The scaling limit and the approximate problem of Section~\ref{sec:finite-source-approx-problem} both reduce the finite-source problem to a one-dimensional ODE, but differ in two respects. First, the effective source distributions differ: the approximate problem uses the spatial marginal $f(s) = \int_A f(s, \alpha)\,\mathrm{d}\alpha$, whereas the scaling limit uses the angular marginal $F(\alpha) = \int_\Omega f(s, \alpha)\,\mathrm{d}s$. For a separable source $f(s, \alpha) = f_s(s)\,f_\alpha(\alpha)$, these are proportional to $f_s$ and $f_\alpha$ respectively. Second, the ODE coefficients differ: at finite height, the tangent $\mathbf{r}'(p)$ retains the base-line term $(1, 0)^\top$, which enters the normal and modifies the reflection geometry. In the scaling limit, this term vanishes and the geometry becomes that of a polar curve centered at the source.

\paragraph{Method selection.}
The scaling limit also gives a concrete criterion for when the finite-source methods of this paper are needed at all. When the reflector is far enough from the source that the source's angular footprint at the reflector is small compared with the angular range $[\alpha_{\min}, \alpha_{\max}]$, classical point-source and nonimaging-optics constructions---the polar-reflector ODE above, SMS designs\,\cite{Benitez2004SMS3D,Winston2004Nonimaging}, and optimal-transport solvers for the point-source Monge--Amp\`ere problem\,\cite{GlimmOliker2003JMS,GangboOliker2007COCV,Romijn2019JOSAA,Romijn2020JCP}---are appropriate, accurate, and considerably cheaper than either method developed here. When the source's angular extent is comparable to the angular footprint of the reflector, neither the scaling limit nor a single one-dimensional ODE in the spirit of Section~\ref{sec:finite-source-approx-problem} captures the full mapping, and a finite-source method such as the neural network of Section~\ref{sec:ann} or the iterative deconvolution of Section~\ref{sec:iterative-deconv} is required.

\end{document}